\documentclass[10pt,twocolumn,letterpaper]{article}

\usepackage{cvpr}
\usepackage{times}
\usepackage{epsfig}
\usepackage{graphicx}
\usepackage{algorithm,algorithmic}
\usepackage{amsmath}
\usepackage{amssymb}
\usepackage{subfigure}

\newtheorem{theorem}{Theorem}
\newtheorem{lemma}{Lemma}
\newtheorem{proof}{Proof}
\newcommand{\RR}[2]{\mathbf{#1} \in \mathbb{R}^{#2}}


\usepackage[pagebackref=true,breaklinks=true,letterpaper=true,colorlinks,bookmarks=false]{hyperref}

\cvprfinalcopy 


\ifcvprfinal\pagestyle{empty}\fi
\begin{document}

\title{Frequency Pooling: Shift-Equivalent and Anti-Aliasing Downsampling}

\author{Zhendong Zhang\\
School of Electronic Engineering, Xidian University\\
Xi'an, 710071, China\\
{\tt\small zhd.zhang.ai@gmail.com}
}

\maketitle

\begin{abstract}
	Convolution utilizes a shift-equivalent prior of images, thus leading to a great success in image processing tasks. However, commonly used poolings in convolutional neural networks (CNNs), such as max-pooling, average-pooling, and strided-convolution, are not shift-equivalent. Thus, the shift-equivalence of CNNs is destroyed when convolutions and poolings are stacked. Moreover, anti-aliasing is another essential property of poolings from the perspective of signal processing. However, recent poolings are neither shift-equivalent nor anti-aliasing. To address this issue, we propose a new pooling method that is shift-equivalent and anti-aliasing, named frequency pooling. Frequency pooling first transforms the features into the frequency domain, and then removes the frequency components beyond the Nyquist frequency. Finally, it transforms the features back to the spatial domain. We prove that frequency pooling is shift-equivalent and anti-aliasing based on the property of Fourier transform and Nyquist frequency.
	Experiments on image classification show that frequency pooling improves accuracy and robustness with respect to the shifts of CNNs.
\end{abstract}

\section{Introduction}

Convolutional neural networks (CNNs) have achieved great success in image processing \cite{goodfellow2016deep}, natural language processing \cite{yin2017comparative}, and game playing \cite{Mnih2013Playing}. One of the reasons is that convolutions utilize the shift-equivalent prior of signals. Modern CNNs include not only convolutional layers but also downsampling or pooling layers. As an important part of CNNs, poolings are used to reduce spatial resolution and aggregate spatial information. Poolings reduce the memory and computational costs of CNNs.

From the signal processing viewpoint, pooling, as a special case of downsampling, should be anti-aliasing. Otherwise, pooling causes frequency aliasing, i.e. the high frequency components of the original signal are projected into the low frequency components of the downsampled signal by pooling, and thus pooling mixes the high and low frequency components. This leads to sub-optimal reconstruction results while misleading the subsequent processing. For anti-alias sampling, the sampling rate should be at least as twice as the highest frequency based on classical Nyquist sampling theorem \cite{nyquist1928certain}. A traditional solution of anti-aliasing is applying a low-pass filter to a signal before downsampling. Inspired by it, early CNNs use average pooling to achieve downsampling \cite{lecun1989handwritten}. Later, empirical evidence suggests max-pooling \cite{scherer2010evaluation} and strided-convolution \cite{long2015fully} provide better performance in accuracy. However, average pooling performs better in anti-aliasing than max-pooling and strided-convolutions \cite{zhang2019making}. 

Shift-equivalence is another essential property of pooling. When convolutions and poolings of no shift-equivalence are stacked, they destroy shift-equivalence of CNNs. Unfortunately, commonly used poolings are not shift-equivalent. To make matters worse, small shifts in the input can drastically change the output when stacking multiple max-poolings or strided-convolutions \cite{zhang2019making, engstrom2017rotation, azulay2018deep}. We provide some examples on ImageNet in Fig. \ref{fig:probabilty}. Shift-equivalence is expected to be a fundamental property of CNNs. However, the fact that CNNs with poolings are not shift-equivalent has been ignored by the machine learning community. Until now, this phenomenon is pointed out by Zhang \cite{zhang2019making}. To make CNNs shift-equivalent, he proposed an anti-aliasing pooling (AA-pooling) by applying a low-pass filter to the signal before downsampling. He claimed that anti-aliasing improves the shift invariance of poolings. He observed better accuracy and generalization in image classification when low-pass filtering is integrated correctly. Although AA-pooling reduces the aliasing effects and makes the output more stable with respect to the input shifts, it is neither shift-equivalent nor anti-aliasing in theory.

In this paper, we propose a shift-equivalent and anti-aliasing pooling. We first transform a signal or an image into the frequency domain via Discrete Fourier Transform (DFT). Then, we only retain its low frequency components, i.e. the frequency components that are smaller than the Nyquist sampling rate. Finally, we transform the low frequency components back into the time or spatial domain via inverse DFT (IDFT). We call this pooling as frequency pooling (F-pooling). Note that a similar method is proposed by Ryu et al. \cite{ryu2018dft-based}. However, they only focused on the classification accuracy without considering the shift-equivalence of pooling in both theory and practice. 

Compared with existing methods, the novelties and contributions of F-pooling are summarized as follows:

\begin{itemize}
	\item To our knowledge, there is no formal definition of shift-equivalence when we apply poolings to discrete signals. In this work, we formally define the shift-equivalence. A suitable upsampling $\mathcal{U}$ needs to be involved in the definition. Without $\mathcal{U}$, the definition for discrete signals is ill-posed. We believe that this mathematical formulation has a great impact on further research.
	
	\item We prove that F-pooling is an optimal anti-aliasing downsampling from the perspective of reconstruction. We also prove that F-pooling is shift-equivalent. The upsampling $\mathcal{U}$ plays an important role in the proof of shift-equivalence. To best of our knowledge, F-pooling is the first pooling method that has both properties. We further investigate whether the shift-equivalence of F-pooling is transitive. Roughly, the composite of convolutions and an F-pooling is still shift-equivalent while the composite of multiple F-poolings may be not shift-equivalent.
	
	\item Experiments on CIFAR-100 and a subset of ImageNet demonstrate that F-pooling remarkably increases accuracy and robustness with respect to the shifts of commonly used network architectures. Moreover, the shift consistency of F-pooling is improved more when we replace zero padding of convolutions with circular padding.
	
\end{itemize}

\begin{figure*}
	\centering
	\subfigure[Max-pooling]{
		\includegraphics[width=0.31\textwidth]{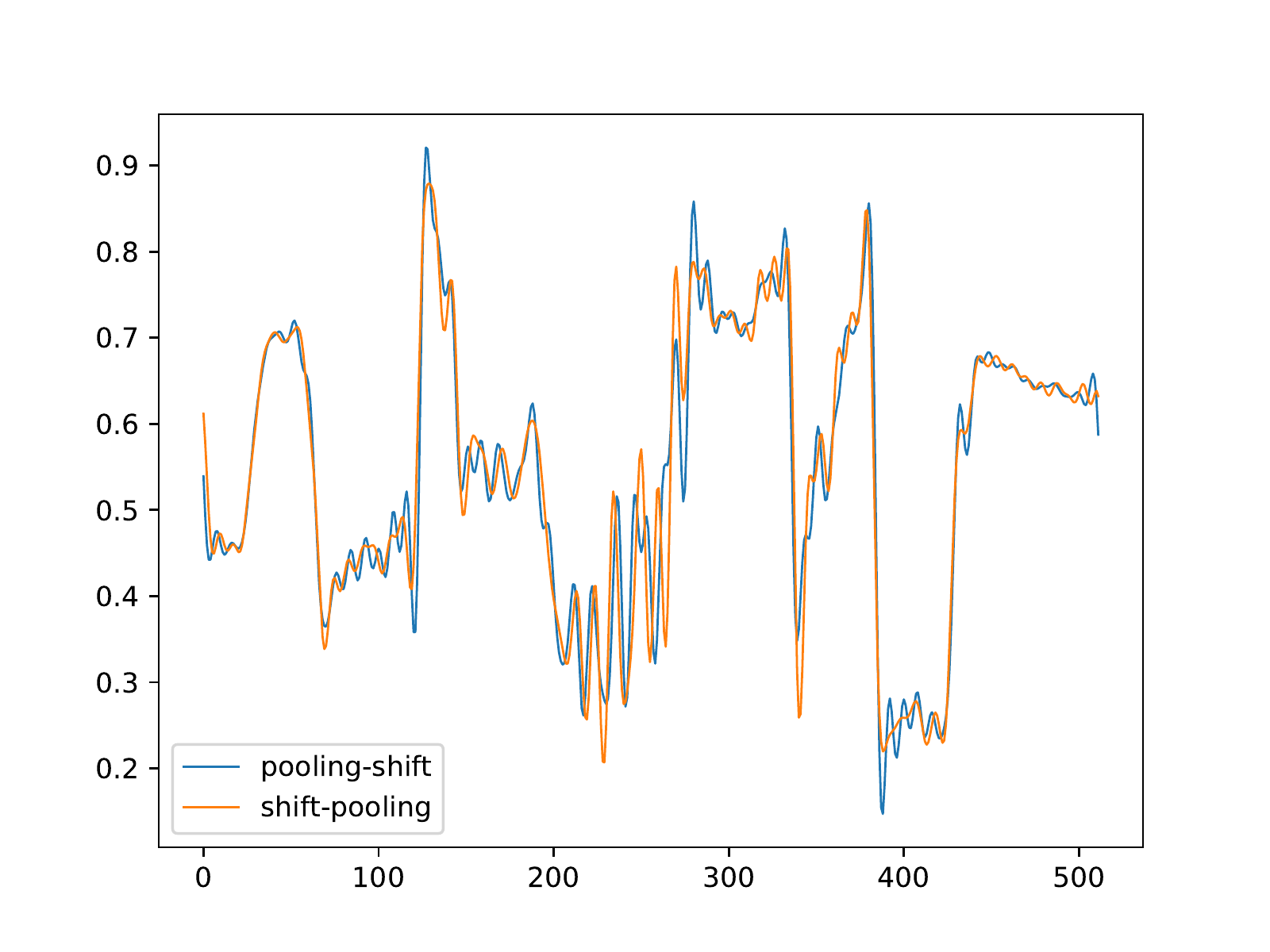}
	}
	\subfigure[Avg-pooling]{
		\includegraphics[width=0.31\textwidth]{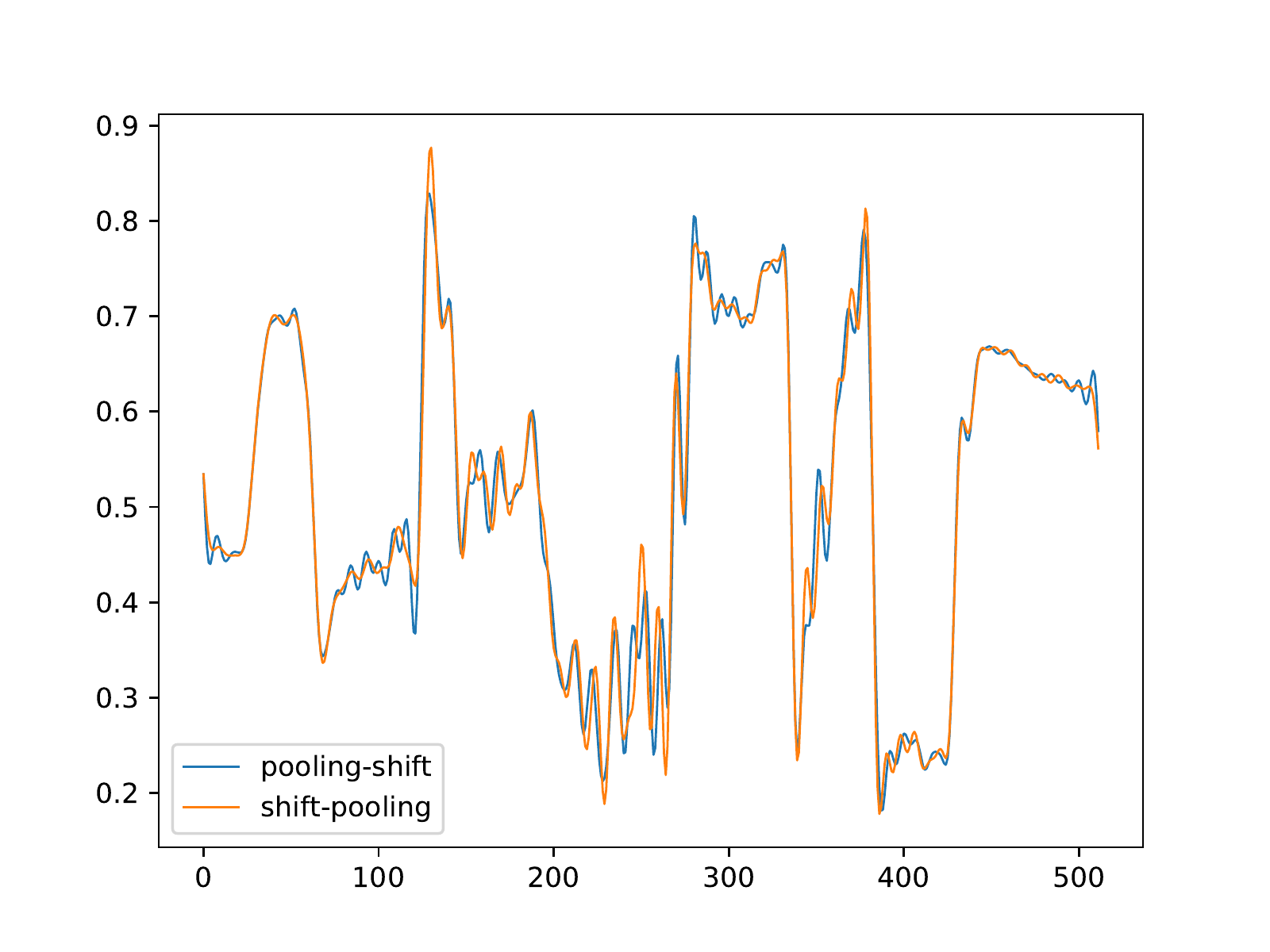}
	}
	\subfigure[F-pooling]{
		\includegraphics[width=0.31\textwidth]{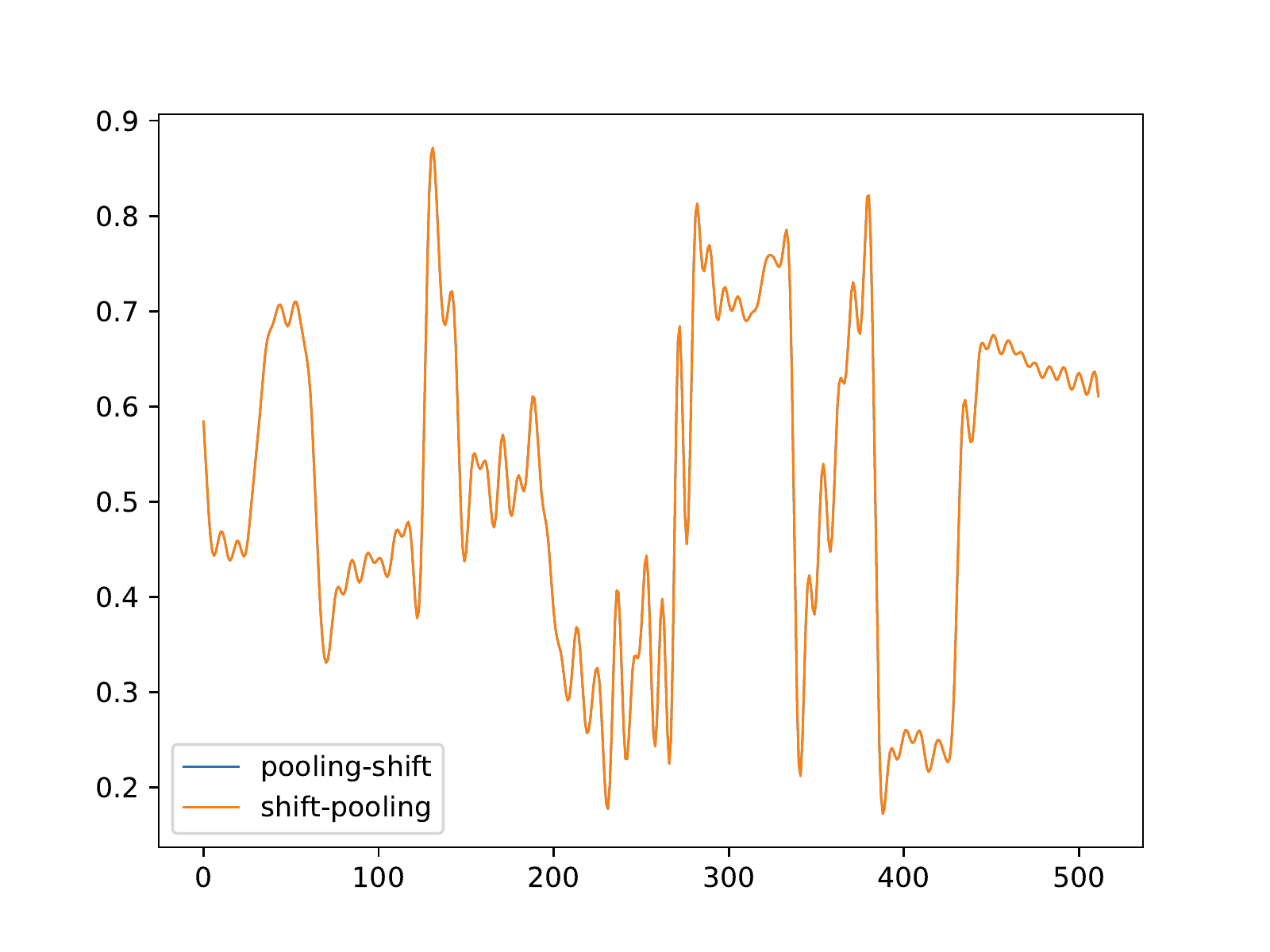}
	}
	\caption{Tests of shift-equivalence on 1D signals. F-pooling is frequency pooling. Blue lines are obtained by pooling, upsampling, and shift in order. Orange lines are obtained by shift, pooling, and upsampling in order. All poolings downsample signals by factor 4. The shift operation shifts signals by 2 pixels. The upsampling operation is set to Eq. \eqref{eq:up}. Best viewed on screen.}
	\label{fig:test}
\end{figure*}

\section{Related Work}
Pooling which reduces the resolution of feature maps is an important function of CNNs. Early CNNs \cite{lecun1989handwritten} use average pooling that is good for anti-aliasing. Later empirical evidence reveals that max-pooling \cite{scherer2010evaluation} and strided-convolutions \cite{long2015fully} provide higher accuracy. However, small shifts in the input can drastically change the output when stacking multiple max-pooling or strided-convolutions \cite{engstrom2017rotation, azulay2018deep}. Other variants such as \cite{graham2014fractional, he2015spatial, lee2016generalizing} (we list a few of them), focus on extending the functionality of pooling \cite{lee2016generalizing} or making pooling adjusted to variable input/output size \cite{graham2014fractional, he2015spatial}. For F-pooling, its input/output size is variable. However, it is not the focus of this work.

Recently, Zhang \cite{zhang2019making} shows that CNNs are more robust to image shifts when low-pass filtering is integrated correctly. Specifically, he decomposes a pooling with downsampling factor $s$ into two parts: a box filter with factor $s$ and a pooling with factor $1$. In this way, he claims that the pooling becomes anti-aliasing due to the smoothness of box filter. Moreover, he shows that anti-aliasing improves the shift robustness of CNNs. However, his method is not strictly shift-equivalent and anti-aliasing in theory. Ryu et al. \cite{ryu2018dft-based} proposes a similar method. They also downsample the features in frequency domain and they show improved accuracy of their method. Williams and Li \cite{williams2018wavelet} propose Wavelet-pooling. They decompose a signal via wavelet transform to retain the lowest sub-band. This process is repeated until the designed down sampling factor is met. They claim that Wavelet-pooling is better than others because it is a global transform instead of a local transform. Two works \cite{ryu2018dft-based,williams2018wavelet} have the same idea as F-pooling in view of removing high frequency components. However, they don't realize that their methods are more robust to the shift. Thus, they neither test the shift-equivalence experimentally nor prove the intrinsic property of their methods theoretically. Mallat et al. \cite{Mallat2013Invariant} and Sirfre and Mallat \cite{2013Rotation} propose an invariant image representation to basic transformations including shifts. However, the shift invariance holds only for continuous signals and the downsamplings are not analyzed.

F-pooling is a complex transform in which DFT and IDFT are involved. Many works integrate complex transform or complex values into neural networks. Amin and Murase \cite{amin2009single} study single-layered complex-valued neural networks for real-valued classification problems. Complex numbers represented in polar coordinates are more suitable for handling rotations. Based on this, Cohen et al. \cite{s.2018spherical} propose spherical CNNs which are rotation-equivalent to deal with signals projected from a spherical surface. Trabelsi et al. \cite{trabelsi2018deep} propose general deep complex CNNs. They adjust batch normalization and non-linear activation for complex CNNs. F-pooling can be used in their method without considering the imaginary part. F-pooling utilizes the shift theorem of DFT and achieves shift-equivalence. This is a new success for the combination of complex transform and neural networks.

\begin{figure*}
	\centering
	
	\begin{subfigure}
		\centering
		{\includegraphics[width=0.25\linewidth]{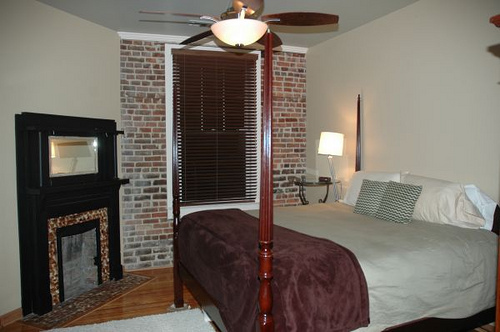}}
		{\includegraphics[width=0.65\linewidth]{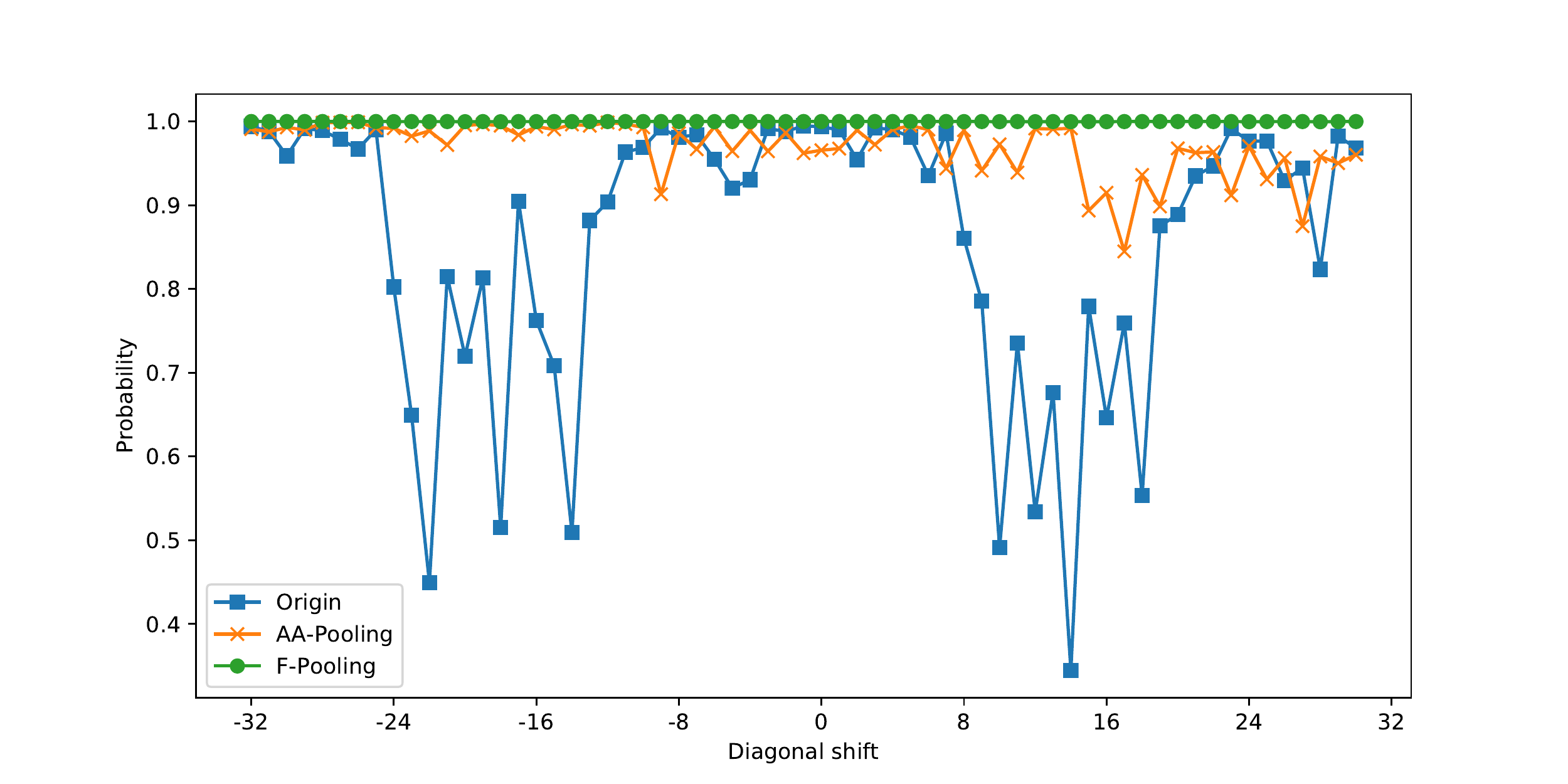}}
	\end{subfigure}
	
	\begin{subfigure}
		\centering
		{\includegraphics[width=0.25\linewidth]{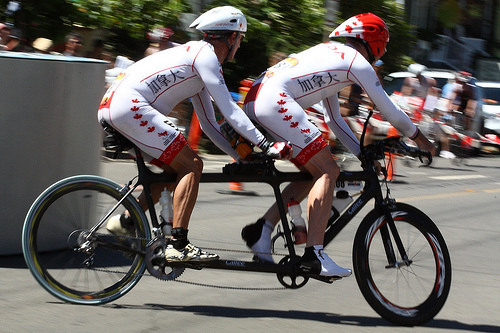}}
		{\includegraphics[width=0.65\linewidth]{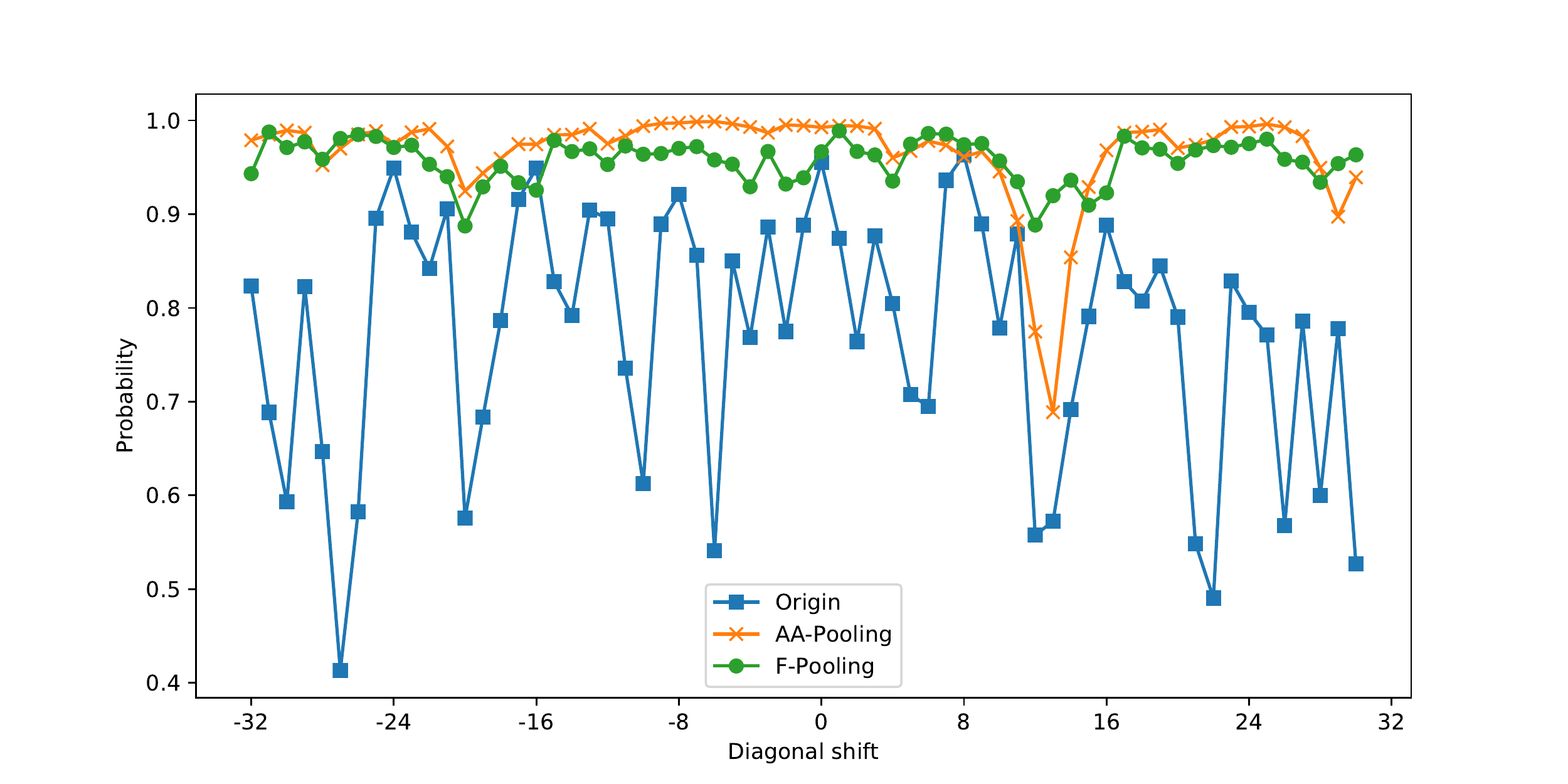}}
	\end{subfigure}
	
	\caption{Classification probability according to the diagonal shift on the images. We use ResNet-50 as the backbone model and integrate different poolings into it. The input images in the left column are selected from the validation set of ImageNet, and the output of CNN varies when we shift the input images. For the right column, the x-coordinate means the pixels of diagonal shift and the y-coordinate means the probabilities of the correct class. F-pooling makes the output more stable than the others with respect to the diagonal shift.}
	
	\label{fig:probabilty}
\end{figure*}

\section{Proposed Method}
In this section, we first define shift-equivalence and anti-aliasing for downsampling. Then, we describe F-pooling in detail and analyze its properties. Finally, we discuss detailed implementation and some practical issues. To simplify the notation, we consider 1D signals when we provide definitions and proofs. It is straightforward to extend them to 2D images.

\subsection{Definition of shift-equivalence}
\label{sec:shift}
Denote $\RR{x}{n}$ and $\RR{y}{m}$ as two 1D signals. We suppose $n > m$. We define
\begin{align}
\mathcal{D}: \mathbf{x} \rightarrow \mathbf{y} \quad & \text{downsampling} \nonumber \\
\mathcal{U}: \mathbf{y} \rightarrow \mathbf{x} \quad &\text{upsampling coupled with } \mathcal{D} \nonumber \\
\mathcal{S}_{\triangle t}: \quad &\text{shift by } \triangle t \text{ pixels} \nonumber 
\end{align}

Then, $\mathcal{D}$ is shift-equivalent if

\begin{equation}
\label{eq_shift}
\boxed
{
	\mathcal{S}_{\triangle t} (\mathcal{UD}\mathbf{x}) \equiv \mathcal{UD} \mathcal{S}_{\triangle t} (\mathbf{x}), \quad \exists \mathcal{U}
}
\end{equation}

If there exists a suitable upsampling $\mathcal{U}$ which makes $\mathcal{U \cdot D}$ and $\mathcal{S}_{\triangle t}$ commutable, then $\mathcal{D}$ is shift-equivalent. That is, applying shift, downsampling and upsampling in order gives the same result as applying upsampling, downsampling and shift in order. Note that the shift operation is required to be circular. When a shifted element hits the edge, it rolls to another side. Unlike the commonly used definition of shift-equivalent such as in \cite{zhang2019making}, an upsampling $\mathcal{U}$ is involved in our definition. We argue that $\mathcal{U}$ is indispensable to make the definition shift-equivalent self-consistent. To see the reason, consider downsampling a signal by factor $s$. Without introducing $\mathcal{U}$, one may define shift-equivalence as:
\begin{eqnarray}
\label{eq_shift_o}
\mathcal{S}_{\triangle t/s}(\mathcal{D}\mathbf{x}) \equiv 
\mathcal{D} \mathcal{S}_{\triangle t} (\mathbf{x})
\end{eqnarray}
However, $\mathcal{S}_{\triangle t/s}$ is not operable for discrete signals when $\triangle t \% s \neq 0$. For example, when $t=1$ and $s=2$, we can not shift the downsampled signal by $0.5$ pixel. To make $\mathcal{S}_{\triangle t/s}$ operable, one should interpolate or upsample the downsampled signals before shifting them, i.e. adding $\mathcal{U}$ at both sides of Eq. \eqref{eq_shift_o}. This leads to the definition of Eq. \eqref{eq_shift}. To our knowledge, it is the first formal definition of shift-equivalence for downsamplings.

As in Eq. \eqref{eq_shift}, to make $\mathcal{D}$ shift-equivalent, one should find the corresponding upsampling $\mathcal{U}$. If such an $\mathcal{U}$ exists, then $\mathcal{U}$ and $\mathcal{D}$ are coupled. The coupled $\mathcal{U}$ plays an important role for proving the shift-equivalence of F-pooling (See Section \ref{sec:F-pooling}).

\subsection{Definition of (anti) aliasing}
Based on classical Nyquist sampling theorem \cite{nyquist1928certain}, the sampling rate must be at least twice as high as the highest frequency of a signal. Otherwise, frequency aliasing appear, i.e. high-frequency components of the signal alias into low-frequency components. This leads to sub-optimal reconstruction results and misleads the following processing because orthogonal components are mixed again. Since CNNs deal with discrete signals, we discuss anti-aliasing for the discrete case. Denote $\mathbf{\hat{x}} \in \mathbb{C}^n$ as the frequency component of $\mathbf{x}$.
\begin{equation}
\mathbf{\hat{x}} = \mathbf{F}_n\mathbf{x}
\end{equation}
where $\mathbf{F}_n \in \mathbb{C}^{n\times n}$ is the so-called DFT-matrix as follows:
\begin{equation}
\mathbf{F}_n ={\begin{bmatrix}\omega _{n}^{0\cdot 0}& \omega _{n}^{0\cdot 1}& \ldots &\omega _{n}^{0\cdot(n-1)}\\
	\omega _{n}^{1\cdot 0}&\omega _{n}^{1\cdot 1}&\ldots &\omega _{n}^{1\cdot (n-1)}\\
	\vdots &\vdots &\ddots &\vdots \\
	\omega _{n}^{(n-1)\cdot 0}&\omega _{n}^{(n-1)\cdot 1}&\ldots &\omega _{n}^{(n-1)\cdot (n-1)}\\
	\end{bmatrix}}
\end{equation}
and $\omega _{n}=e^{-2\pi i/n}$. Based on the property of DFT, the highest frequency of $\mathbf{x}$ is $\lfloor \frac{n}{2} \rfloor$. The sampling rate of a discrete signal is its number of elements.

Now, we define aliasing of downsamplings. Recall $\mathcal{D}: \mathbb{R}^n \rightarrow \mathbb{R}^m$ downsamples a signal into $m$ elements.
\begin{center}
	\noindent\fbox
	{
		\parbox{.9\linewidth}{$\mathcal{D}$ is anti-aliasing if the highest frequency of $\mathcal{D}\mathbf{x}$ is no more than $\lfloor \frac{m}{2} \rfloor$. Otherwise, $\mathcal{D}$ is aliasing.}
	}
\end{center}

\begin{figure*}[t]
	\centering
	\includegraphics[width=0.8\textwidth]{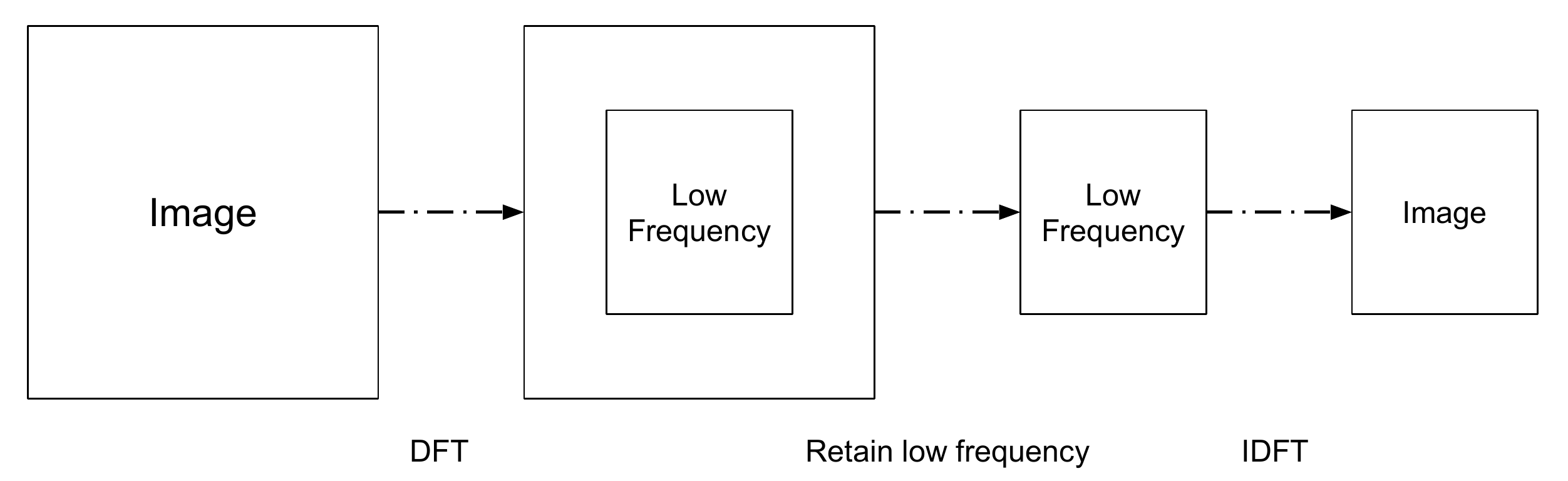}
	\caption{Illustration of the forward process of F-pooling. We assume that the lowest frequency components are located at the center.}
	\label{fig:F-pooling}
\end{figure*}

\subsection{F-pooling}
\label{sec:F-pooling}
The basic idea of F-pooling is to remove the high frequency components of signals and reconstruct the signals only using the low frequency components. In this work, \emph{high frequency components mean the frequencies that are beyond Nyquist frequency}, i.e. half of signal resolution. Moreover, \emph{low frequency components mean the frequencies that are not higher than Nyquist frequency}. To remove high frequency components, we first transform signals into the frequency domain via DFT to retain only the low frequency components. Then, we transform the low frequency components back to the time domain via inverse DFT (IDFT). Fig. \ref{fig:F-pooling} illustrates F-pooling. F-pooling downsamples $n$ elements into $m$ elements. Without loss of generality, we assume $m$ is even. We denote $\mathbf{D}_{\mu}$ as an operation that selects the first $\mu$ rows and the last $\mu$ rows of a matrix.
\begin{equation}
\mathbf{D}_\mu (\mathbf{x})= [\mathbf{x_{1:\mu}}; \mathbf{x}_{N-\mu+1:N}]
\end{equation}
When applying $\mathbf{D}_\mu$ to a signal in the frequency domain, we obtain its lowest $\mu$ frequency basis. Recall that $\mathbf{F}$ is the DFT-matrix and $\mathbf{F}^*$ is the inverse DFT-matrix. Then, the function of F-pooling is represented as:

\begin{equation}
\label{eq:f_pooling}
\boxed
{
	\frac{1}{n} \mathbf{F}_m^*\mathbf{D}_{\frac{m}{2}} \mathbf{F}_n\mathbf{x} \overset{def}{=} \mathbf{Px}
}
\end{equation}
$\mathbf{P} \in \mathbb{C}^{M\times N}$ is the transform matrix of F-pooling.

As mentioned in Section \ref{sec:shift}, the choice of upsampling $\mathcal{U}$ is important. In this work, $\mathcal{U}$ is set to the inverse F-pooling. Specifically, we transform a signal into the frequency domain. Then, we zero-pad the transformed signal to match the resolution of the output. Finally, we transform it back to the time domain. The inverse F-pooling is also represented by matrix multiplications as follows:
\begin{equation}
\label{eq:up}
\boxed
{
	\frac{1}{m} \mathbf{F}_n^* \mathbf{U}_{\frac{m}{2}} \mathbf{F}_m\mathbf{y} \overset{def}{=} \mathbf{\overline{P}y}
}
\end{equation}
where $\mathbf{U}_\mu$ is the zero-padding operation.
\begin{equation}
\mathbf{U}_\mu (\mathbf{y}) = [\mathbf{y_{1:\mu}}; \mathbf{0}; \mathbf{y}_{n-\mu+1:n}]
\end{equation}
and $\overline{\mathbf{P}}$ is the inverse transform matrix of F-pooling.

\begin{figure*}[]
	\centering
	\includegraphics[width=0.9\textwidth]{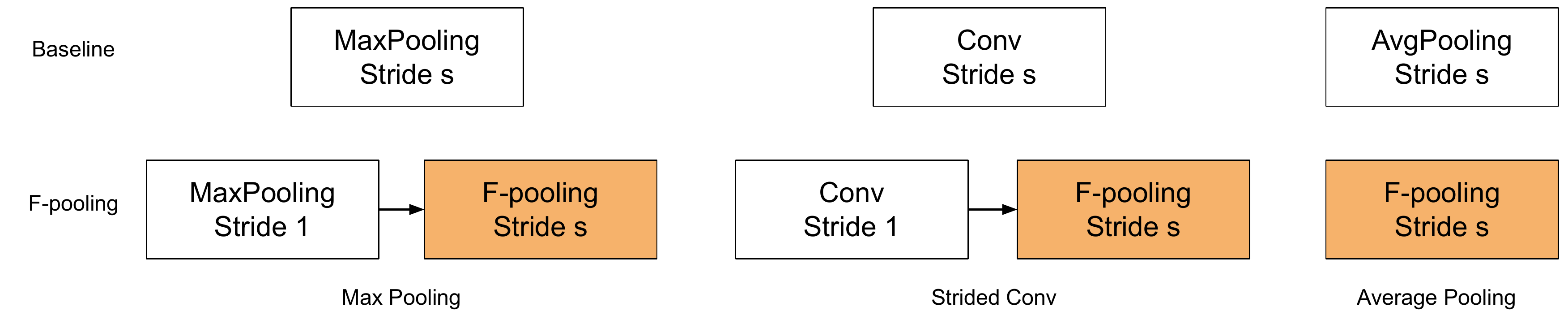}
	\caption{Illustration of replacing max-pooling, average pooling, and stride convolution with F-pooling. We follow the settings in \cite{zhang2019making}.}
	\label{fig:pooling}
\end{figure*}

\subsection{Properties of F-pooling}
\label{sec:prof_1}
In this section, we discuss and prove the properties of F-pooling on the premise that $\mathcal{U}$ is set to $\mathbf{\overline{P}}$.

\subsubsection{Anti-aliasing}
When F-pooling downsamples a signal into $m$ elements, it first remove the frequency components of the signal that are higher than $\lfloor \frac{m}{2} \rfloor$. Thus, F-pooling is anti-aliasing by definition. Note that the previous anti-aliasing pooling \cite{zhang2019making} applies low-pass filtering before downsampling. Although its idea is similar to F-pooling, it is not anti-aliasing in theory since the low-pass filter used in \cite{zhang2019making} cannot remove high frequency components completely.

\subsubsection{Optimal reconstruction}
Prior to discussing the optimal reconstruction, we first prove that applying F-pooling and its inverse in order keeps the low frequency components. The low frequency component $\mathbf{x}_l$ and high frequency component $\mathbf{x}_h$ are defined as:
\begin{equation}
\boxed
{
	\mathbf{x}_l \overset{def}{=} \frac{1}{n}\mathbf{F}_n^*\mathbf{L}_{\frac{m}{2}}\mathbf{F}_n \mathbf{x}, 
	\quad \mathbf{x}_h \overset{def}{=} \mathbf{x} - \mathbf{x}_l
}
\end{equation}
where $\mathbf{L}_\mu$ keeps the first and last $\mu$ rows of a matrix by setting other rows to zero. Thus $\mathbf{L}_\mu$ is a diagonal matrix whose elements are $0$ or $1$.

\begin{lemma}
	\label{the_1}
	$\mathbf{\overline{P}Px} = \mathbf{x}_l$.
\end{lemma}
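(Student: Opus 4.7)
The plan is to compose the definitions of $\mathbf{P}$ and $\overline{\mathbf{P}}$, collapse the inner $\mathbf{F}_m\mathbf{F}_m^*$ using orthogonality of the DFT matrix, and then identify the composite selection-then-padding operator $\mathbf{U}_{m/2}\mathbf{D}_{m/2}$ with the low-pass mask $\mathbf{L}_{m/2}$.

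Concretely, first I would substitute the definitions to obtain
\begin{equation*}
\overline{\mathbf{P}}\mathbf{P}\mathbf{x}
= \tfrac{1}{mn}\,\mathbf{F}_n^{*}\,\mathbf{U}_{m/2}\,\bigl(\mathbf{F}_m\mathbf{F}_m^{*}\bigr)\,\mathbf{D}_{m/2}\,\mathbf{F}_n\mathbf{x}.
\end{equation*}
Next I would invoke the standard orthogonality relation $\mathbf{F}_m\mathbf{F}_m^{*} = m\,\mathbf{I}_m$ (which follows directly from $\omega_m^{k}$ being roots of unity and can be cited as a basic property of the DFT matrix). This cancels the factor $m$ in the denominator and reduces the expression to
\begin{equation*}
\overline{\mathbf{P}}\mathbf{P}\mathbf{x}
= \tfrac{1}{n}\,\mathbf{F}_n^{*}\,\bigl(\mathbf{U}_{m/2}\mathbf{D}_{m/2}\bigr)\,\mathbf{F}_n\mathbf{x}.
\end{equation*}

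The remaining step, and the only one requiring any thought, is the operator identity $\mathbf{U}_{m/2}\mathbf{D}_{m/2} = \mathbf{L}_{m/2}$ as maps on $\mathbb{C}^n$. I would argue this by tracking a generic vector $\mathbf{v}\in\mathbb{C}^n$: the operator $\mathbf{D}_{m/2}$ extracts the top $m/2$ and bottom $m/2$ entries into an $m$-vector, then $\mathbf{U}_{m/2}$ reinserts them at the top and bottom of an $n$-vector with zeros sandwiched in the middle. Indexwise, the $i$-th coordinate of $\mathbf{U}_{m/2}\mathbf{D}_{m/2}\mathbf{v}$ equals $v_i$ when $i\le m/2$ or $i>n-m/2$, and zero otherwise, which is precisely the action of the diagonal mask $\mathbf{L}_{m/2}$. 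Substituting this identity recovers $\tfrac{1}{n}\mathbf{F}_n^{*}\mathbf{L}_{m/2}\mathbf{F}_n\mathbf{x}=\mathbf{x}_l$ by definition.

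I do not expect a real obstacle here; the only subtle point is bookkeeping the indices to confirm $\mathbf{U}_{m/2}\mathbf{D}_{m/2}=\mathbf{L}_{m/2}$, in particular that the placement of the retained low and ``negative'' frequency bins in $\mathbf{D}_{m/2}$ matches the nonzero pattern of $\mathbf{L}_{m/2}$ as defined (both singling out indices $1,\dots,m/2$ and $n-m/2+1,\dots,n$). Once this is observed, the lemma is immediate and independent of whether $n$ is even or odd, as long as $m$ is even as assumed in the construction of F-pooling.
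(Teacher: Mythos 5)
Your proposal is correct and follows essentially the same route as the paper: substitute the definitions, cancel $\mathbf{F}_m\mathbf{F}_m^* = m\mathbf{I}_m$, and verify $\mathbf{U}_{m/2}\mathbf{D}_{m/2} = \mathbf{L}_{m/2}$ (the paper does this last step by writing $\mathbf{D}_{m/2}$ as an explicit block matrix with $\mathbf{U}_{m/2} = \mathbf{D}_{m/2}^T$, while you track coordinates of a generic vector, which is equivalent). No gaps.
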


\begin{proof}
	\begin{align}
	\label{eq:prof_3}
	\mathbf{\overline{P}Px} & = \frac{1}{mn} \mathbf{F}_n^* \mathbf{U}_{\frac{m}{2}} \mathbf{F}_m
	\mathbf{F}_m^*\mathbf{D}_{\frac{m}{2}} \mathbf{F}_n  \mathbf{x} \\ \nonumber
	& = \frac{1}{n} \mathbf{F}_n^* \left( \mathbf{U}_{\frac{m}{2}} \mathbf{D}_{\frac{m}{2}}  \right) \mathbf{F}_n  \mathbf{x} \\ \nonumber
	& = \frac{1}{n} \mathbf{F}_n^* \mathbf{L}_{\frac{m}{2}} \mathbf{F}_n  \mathbf{x} = \mathbf{x}_l
	\end{align}
	The last line of \eqref{eq:prof_3} holds by the definitions of $\mathbf{U}$, $\mathbf{D}$, and $\mathbf{L}$. Specifically, $\mathbf{D}$ can be represented as an $m\times n$ matrix and $\mathbf{U} = \mathbf{D}^T$.
	\begin{equation}
	\mathbf{D}_{\frac{m}{2}} ={\begin{bmatrix}
		\mathbf{I}_{\frac{m}{2}} & \cdots & \mathbf{0} & \cdots &\mathbf{0}_{\frac{m}{2}} \\
		\mathbf{0}_{\frac{m}{2}} & \cdots & \mathbf{0} &\cdots &\mathbf{I}_{\frac{m}{2}} \\
		\end{bmatrix}}
	\end{equation}
	where $\mathbf{I}_{m}$ is an $m\times m$ identity matrix and $\mathbf{0}_{m}$ is an $m\times m$ zero matrix. Then, we have
	\begin{equation}
	\mathbf{U}_{\frac{m}{2}} \mathbf{D}_{\frac{m}{2}} ={\begin{bmatrix}
		\mathbf{I}_{\frac{m}{2}} & \mathbf{0} & \mathbf{0} \\
		\mathbf{0} & \mathbf{0}_{n-m}& \mathbf{0} \\
		\mathbf{0} & \mathbf{0} & \mathbf{I}_{\frac{m}{2}} \\
		\end{bmatrix}}
	= \mathbf{L}_{\frac{m}{2}}
	\end{equation}
	
	$\hfill\blacksquare$ 
\end{proof}

Now, we prove that F-pooling is the optimal anti-aliasing downsampling from the perspective of reconstruction. 

\begin{theorem}
	F-pooling $\mathbf{P}$ is the optimal anti-aliasing downsampling that minimizes the following objective:
	\begin{equation}
	\label{eq:obj_1}
	\min_\mathcal{D} || \mathbf{\overline{P}}\mathcal{D}\mathbf{x} - \mathbf{x} ||_2^2, \quad s.t. \quad \mathcal{D} \text{ is anti-aliasing}
	\end{equation}
\end{theorem}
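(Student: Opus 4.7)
The plan is to exploit the fact that $\overline{\mathbf{P}}$ has a very restricted range. Regardless of which downsampling $\mathcal{D}:\mathbb{R}^n\to\mathbb{R}^m$ we pick, the composition $\overline{\mathbf{P}}\mathcal{D}$ always lands in the subspace $V_l\subset\mathbb{R}^n$ consisting of signals whose DFT coefficients vanish in the ``middle'' bins indexed between $m/2$ and $n-m/2$. This is immediate from the definition of $\overline{\mathbf{P}}$, which zero-pads in the frequency domain via $\mathbf{U}_{m/2}$ before inverting the DFT. Hence the infimum of $\|\overline{\mathbf{P}}\mathcal{D}\mathbf{x}-\mathbf{x}\|_2^2$ over all $\mathcal{D}$ is bounded below by the $\ell_2$-distance from $\mathbf{x}$ to $V_l$, and it suffices to exhibit a single feasible (anti-aliasing) $\mathcal{D}$ that attains this distance.

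Next, I would identify the best approximation to $\mathbf{x}$ from $V_l$. Using the Parseval identity implied by $\mathbf{F}_n^*\mathbf{F}_n=n\mathbf{I}$, namely $\|\mathbf{u}\|_2^2=\tfrac{1}{n}\|\mathbf{F}_n\mathbf{u}\|_2^2$, I would rewrite
\begin{equation*}
\|\mathbf{z}-\mathbf{x}\|_2^2=\frac{1}{n}\sum_{k=0}^{n-1}\bigl|(\mathbf{F}_n\mathbf{z})_k-(\mathbf{F}_n\mathbf{x})_k\bigr|^2
\end{equation*}
for any $\mathbf{z}\in V_l$. The middle-frequency bins contribute $|(\mathbf{F}_n\mathbf{x})_k|^2$ with no freedom (since membership in $V_l$ forces those coefficients to zero), while the low-frequency bins of $\mathbf{z}$ are unconstrained and can be matched exactly to those of $\mathbf{x}$. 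The unique minimizer over $V_l$ is therefore $\mathbf{z}^{\star}=\tfrac{1}{n}\mathbf{F}_n^*\mathbf{L}_{m/2}\mathbf{F}_n\mathbf{x}=\mathbf{x}_l$.

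To close the loop I would invoke Lemma~\ref{the_1}, which gives $\overline{\mathbf{P}}\mathbf{P}\mathbf{x}=\mathbf{x}_l=\mathbf{z}^{\star}$, so $\mathcal{D}=\mathbf{P}$ attains the lower bound. F-pooling is also anti-aliasing by the argument in Section~\ref{sec:prof_1}, hence it is feasible for the constrained problem and therefore optimal. The main obstacle I anticipate is conceptual rather than technical: the anti-aliasing constraint in the objective does essentially no work, because the restriction to $V_l$ is already imposed automatically by $\overline{\mathbf{P}}$. I would make this explicit in the write-up by pointing out that the lower bound holds for \emph{every} $\mathcal{D}$, and that the witness $\mathbf{P}$ just happens to satisfy the constraint. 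A minor technical subtlety is keeping track of the $1/n$ normalization factor and the indexing that makes $\mathbf{U}_{m/2}\mathbf{D}_{m/2}=\mathbf{L}_{m/2}$ when $m$ is even, but the algebra needed for this is exactly what the proof of Lemma~\ref{the_1} already carries out.
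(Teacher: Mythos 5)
Your proposal is correct, and it reaches the same destination as the paper via a recognizably different framing. The paper works by direct expansion: it writes $\mathbf{x}=\mathbf{x}_l+\mathbf{x}_h$, expands $\|\overline{\mathbf{P}}\mathcal{D}\mathbf{x}-\mathbf{x}\|_2^2=\|\overline{\mathbf{P}}\mathcal{D}\mathbf{x}-\mathbf{x}_l\|_2^2+\|\mathbf{x}_h\|_2^2$ after killing the cross terms $\langle\overline{\mathbf{P}}\mathcal{D}\mathbf{x},\mathbf{x}_h\rangle$ and $\langle\mathbf{x}_l,\mathbf{x}_h\rangle$ by orthogonality of disjoint frequency supports, and then invokes Lemma~\ref{the_1} to make the first term vanish at $\mathcal{D}=\mathbf{P}$. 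You instead phrase it as a best-approximation-from-a-subspace problem: the range of $\overline{\mathbf{P}}$ is contained in the low-frequency subspace $V_l$, the projection of $\mathbf{x}$ onto $V_l$ is $\mathbf{x}_l$ by Parseval, and Lemma~\ref{the_1} exhibits $\mathbf{P}$ as a feasible witness attaining that projection. The underlying mathematics (Pythagoras in the Fourier domain plus Lemma~\ref{the_1}) is identical, but your version buys two things. First, it sidesteps the slightly garbled cross-term bookkeeping in the paper's expansion (which is missing the factor of $2$ and the real part, harmlessly, since the terms vanish anyway). Second, and more substantively, you correctly observe that the lower bound $\|\overline{\mathbf{P}}\mathcal{D}\mathbf{x}-\mathbf{x}\|_2^2\ge\|\mathbf{x}_h\|_2^2$ holds for \emph{every} $\mathcal{D}$, because the confinement to $V_l$ is forced by $\overline{\mathbf{P}}$ alone; the paper, by contrast, explicitly leans on the anti-aliasing hypothesis to argue $\langle\overline{\mathbf{P}}\mathcal{D}\mathbf{x},\mathbf{x}_h\rangle=0$, when that hypothesis is in fact doing no work there. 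Making that explicit, as you propose, is a genuine sharpening of the statement (optimality holds unconstrained, with $\mathbf{P}$ merely happening to be anti-aliasing), and it would be a worthwhile remark to add.
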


\begin{proof}
	\begin{align}
	\label{eq:prof_2}
	|| \mathbf{\overline{P}}\mathcal{D}\mathbf{x} - \mathbf{x} ||_2^2 & = || \mathbf{\overline{P}}\mathcal{D}\mathbf{x} - \mathbf{x}_l - \mathbf{x}_h ||_2^2 \\ \nonumber
	\label{eq:prof_1}
	& = ||\mathbf{\overline{P}}\mathcal{D}\mathbf{x} - \mathbf{x}_l ||_2^2 + || \mathbf{x}_h ||_2^2 +\\ \nonumber
	&  \quad \left< \mathbf{\overline{P}}\mathcal{D}\mathbf{x}, \mathbf{x}_h \right> - \left< \mathbf{x}_l, \mathbf{x}_h \right> \\ \nonumber
	& = || \mathbf{\overline{P}}\mathcal{D}\mathbf{x} - \mathbf{x}_l ||_2^2 + || \mathbf{x}_h ||_2^2
	\end{align}
	Due to the orthogonality of frequency spectrum,
	$\left< \mathbf{x}_l, \mathbf{x}_h \right> = 0$. Because $\mathcal{D}$ is anti-aliasing here, $\mathcal{D}\mathbf{x}$ only contains low frequencies of $\mathbf{x}$. Also because $\mathbf{\overline{P}}$ introduces no extra frequencies, $\mathbf{\overline{P}}\mathcal{D}\mathbf{x}$ only contains low frequencies of $\mathbf{x}$. Then, we have $\left< \mathbf{\overline{P}}\mathcal{D}\mathbf{x}, \mathbf{x}_h \right> = 0$. This reveals that Eq. \eqref{eq:prof_2} holds.
\end{proof}

Now, minimizing the reconstruction error is equivalent of minimizing:
$|| \mathbf{\overline{P}}\mathcal{D}\mathbf{x} - \mathbf{x}_l ||_2^2$. Based on lemma \ref{the_1}, $|| \mathbf{\overline{P}P}\mathbf{x} - \mathbf{x}_l ||_2^2 = 0$. Thus  the reconstruction error is minimized when $\mathcal{D} = \mathbf{P}$. We have proved that F-pooling is the optimal anti-aliasing downsampling from the perspective of reconstruction. $\hfill\blacksquare$ 

Although it is difficult to define the optimality for an intermediate layer, we can assume that the reconstruction optimality is related to the final performance, e.g. accuracy for image classification. Prior works have shown that self reconstruction of intermediate features is helpful for image classification \cite{rasmus2015semi-supervised}. Thus, it is expected to maintain intermediate features for poolings as much as possible. This means the optimal reconstruction property of F-pooling is helpful for image classification.

\subsubsection{Shift-equivalence}
\begin{theorem}
	F-pooling $\mathbf{P}$ is shift-equivalent.
\end{theorem}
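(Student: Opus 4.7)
The plan is to reduce the claim to a matrix identity using Lemma~\ref{the_1} and then exploit the DFT shift theorem together with the fact that two diagonal matrices commute. Writing $\mathcal{U} = \overline{\mathbf{P}}$, the statement to verify is $\mathcal{S}_{\triangle t}(\overline{\mathbf{P}}\mathbf{P}\mathbf{x}) = \overline{\mathbf{P}}\mathbf{P}\,\mathcal{S}_{\triangle t}(\mathbf{x})$. By Lemma~\ref{the_1}, the composite $\overline{\mathbf{P}}\mathbf{P}$ collapses to the low-frequency projector $\frac{1}{n}\mathbf{F}_n^*\mathbf{L}_{m/2}\mathbf{F}_n$, so the entire problem reduces to showing that this projector commutes with the circular shift $\mathcal{S}_{\triangle t}$.

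Next I would express the circular shift in the frequency domain. The standard DFT shift theorem states that shifting a signal by $\triangle t$ pixels multiplies its $k$-th Fourier coefficient by $e^{-2\pi i k \triangle t/n}$. In matrix form this reads $\mathbf{F}_n \mathcal{S}_{\triangle t}(\mathbf{x}) = \mathbf{\Phi}_{\triangle t}\mathbf{F}_n\mathbf{x}$, where $\mathbf{\Phi}_{\triangle t}$ is the diagonal matrix $\mathrm{diag}(e^{-2\pi i k\triangle t/n})_{k=0}^{n-1}$; equivalently $\mathcal{S}_{\triangle t} = \frac{1}{n}\mathbf{F}_n^*\mathbf{\Phi}_{\triangle t}\mathbf{F}_n$. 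I would briefly justify this from the definition of the DFT matrix $\mathbf{F}_n$, since the paper has not previously invoked the shift theorem.

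Substituting both expressions, the right-hand side becomes $\frac{1}{n}\mathbf{F}_n^*\mathbf{L}_{m/2}\mathbf{\Phi}_{\triangle t}\mathbf{F}_n\mathbf{x}$, while the left-hand side, after applying the shift to the low-frequency projection and using the unitarity $\frac{1}{n}\mathbf{F}_n\mathbf{F}_n^* = \mathbf{I}$ to cancel the inner $\mathbf{F}_n\mathbf{F}_n^*$ pair, becomes $\frac{1}{n}\mathbf{F}_n^*\mathbf{\Phi}_{\triangle t}\mathbf{L}_{m/2}\mathbf{F}_n\mathbf{x}$. The two sides therefore agree iff $\mathbf{L}_{m/2}$ and $\mathbf{\Phi}_{\triangle t}$ commute, and since both are diagonal matrices this is immediate. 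Sandwiching the common matrix back between $\mathbf{F}_n^*$ and $\mathbf{F}_n$ completes the identity.

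The argument is short because Lemma~\ref{the_1} has already absorbed the combinatorics of $\mathbf{U}_{m/2}\mathbf{D}_{m/2} = \mathbf{L}_{m/2}$. The one place that requires care, and I would flag it as the main bookkeeping obstacle, is the alignment between the convention used to define $\mathbf{L}_{m/2}$ (the first and last $m/2$ indices) and the indexing of $\mathbf{\Phi}_{\triangle t}$ inherited from $\mathbf{F}_n$: the diagonal commutation is trivial once both matrices are written with respect to the same frequency ordering, but it would be wrong to tacitly apply the shift theorem in a centered-frequency convention while $\mathbf{L}_{m/2}$ is written in the zero-indexed convention. With that indexing consistent, no further computation is needed and the theorem follows.
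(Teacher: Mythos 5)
Your proposal is correct and follows essentially the same route as the paper: both reduce the claim via Lemma~\ref{the_1} to showing that the low-frequency projector $\frac{1}{n}\mathbf{F}_n^*\mathbf{L}_{m/2}\mathbf{F}_n$ commutes with the circular shift, and both resolve this with the DFT shift theorem; your phrasing via commuting diagonal matrices is just a matrix-form restatement of the paper's element-wise product with the phase vector $\mathbf{S}_{\triangle t}$. Your explicit flag about keeping the frequency-indexing convention consistent between $\mathbf{L}_{m/2}$ and the phase diagonal is a worthwhile point of care that the paper glosses over, but it does not change the argument.
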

\begin{proof}
	\label{sec:prof_2}
	Based on the shift theorem of Fourier transform, we have
	\begin{align}
	\mathbf{F}_n \mathcal{S}_{\triangle t} (\mathbf{x}) = & \mathbf{F}_n \mathbf{x} \odot \mathbf{S}_{\triangle t} \\
	\mathcal{S}_{\triangle t} (\mathbf{F}_n^* \mathbf{x}) = & \mathbf{F}_n^* \mathbf{x} \odot \mathbf{S}_{\triangle t}
	\end{align}
	where $\mathbf{S}_{\triangle t} \in \mathbb{C}^n$ whose $k$th element is $e^{-\frac{2\pi k}{n}\triangle t}$ and $\odot$ is element-wise multiplication. Combining with \eqref{eq:prof_3}, we have
	\begin{align}
	\label{eq:prof_4} 
	\mathbf{\overline{P}P}\mathcal{S}_{\triangle t}(\mathbf{x}) & = \frac{1}{n} \mathbf{F}_n^* \mathbf{I}_{\frac{m}{2}} \mathbf{F}_n \mathcal{S}_{\triangle t}(\mathbf{x}) \\ \nonumber
	& = \frac{1}{n} \mathbf{F}_n^* \left( \mathbf{I}_{\frac{m}{2}} \mathbf{F}_n \mathbf{x} \right) \odot \mathbf{S}_{\triangle t} \\ \nonumber
	& = \mathcal{S}_{\triangle t} \left( \frac{1}{n}\mathbf{F}_n^* \mathbf{I}_{\frac{m}{2}} \mathbf{F}_n \mathbf{x} \right) \\ \nonumber
	& = \mathcal{S}_{\triangle t} (\mathbf{\overline{P}P}\mathbf{x})
	\end{align}
	Based on the definition in Section \ref{sec:shift}, we prove that F-pooling is shift-equivalent. $\hfill\blacksquare$
\end{proof}

\begin{figure}[]
	\centering
	\includegraphics[width=0.45\textwidth]{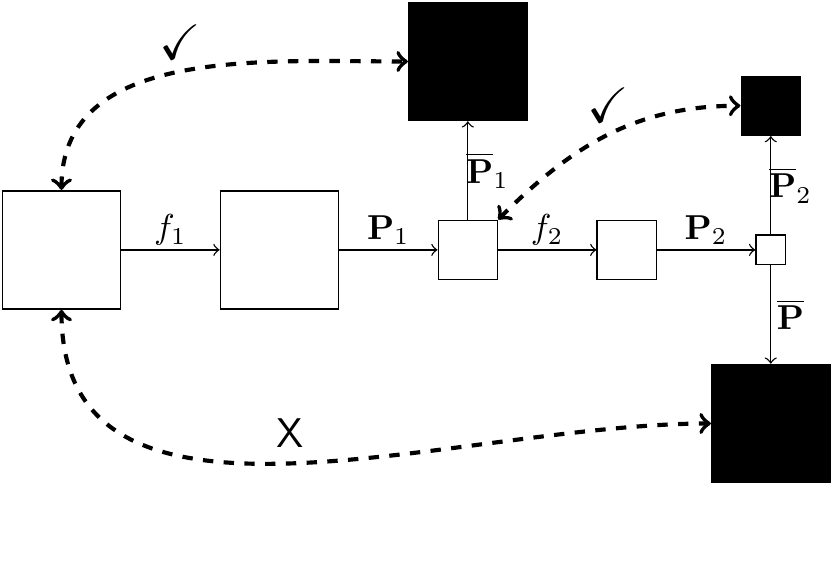}
	\caption{Transitivity of shift-equivalence in two stacked F-poolings. $f_i$ is a shift-equivalent function which keeps the resolution of features. The hollowed rectangles indicate the features of a CNN while the solid rectangles indicate the features which are upsampled by inverse F-pooling. $\checkmark$ means two features are shift-equivalent, while X means they are not shift-equivalent.} 
	\label{fig:shift_t}
\end{figure}

\begin{figure}[]
	\centering
	\includegraphics[width=0.45\textwidth]{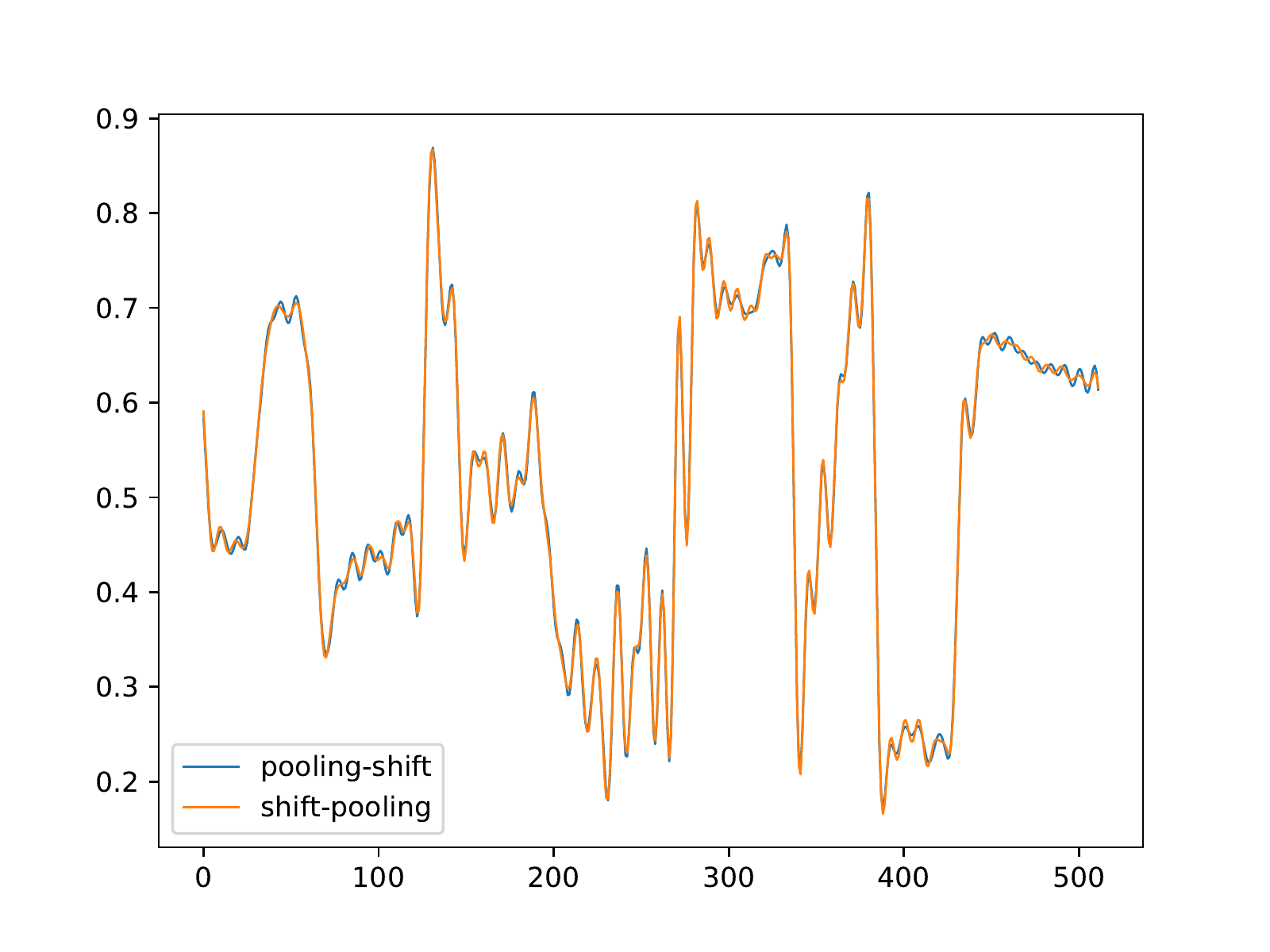}
	\caption{Without odd padding, F-pooling is not strict shift-equivalent. However, the error is acceptable. Best viewed on screen.}
	\label{fig:odd}
\end{figure}

\begin{table*}[t]
	\caption{Accuracy and consistency on CIFAR-100}
	\centering
	\begin{tabular}{ccccc}
		\hline
		Accuracy/Consistency & Shift argument & DenseNet-41 & ResNet-18 & ResNet-34\\
		\hline
		\hline
		Origin             & with & 74.90/71.55 &75.52/70.21& 76.56/72.21\\
		AA-pooling     & with & \textbf{75.55}/71.71 &\textbf{77.43}/73.08 & 76.95/73.38\\
		F-pooling        & with & 75.45/\textbf{71.91} &77.36/\textbf{73.43} & \textbf{77.68}/\textbf{73.54}\\
		\hline
		Origin             & without & 71.81/57.27 & 67.60/45.00 & 68.11/46.56\\
		AA-pooling     & without & \textbf{73.81}/60.29 & \textbf{74.49}/\textbf{58.24} & \textbf{74.00}/\textbf{57.72}\\
		F-pooling        & without & 73.19/\textbf{60.51} & 73.93/57.48 & 73.51/56.54\\
		\hline
	\end{tabular}	
	\label{tab:cifar}
\end{table*}

\subsubsection{Transitivity of shift-equivalent} We study the transitivity of shift-equivalent for F-pooling. Denote $f$ as a shift-equivalent function which keeps the resolution of features, i.e. no downsampling and upsampling are involved. We consider whether the composition of F-pooling and $f$ is shift-equivalent.

\begin{theorem}
	The function $\mathbf{P}f$ is shift-equivalent.
\end{theorem}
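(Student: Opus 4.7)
The plan is to show that the coupled upsampling $\mathcal{U}$ required by Eq.~\eqref{eq_shift} for the composite $\mathbf{P}f$ can be taken to be the very same inverse F-pooling $\overline{\mathbf{P}}$ already used for $\mathbf{P}$, and then to chain the shift-equivalence of $\mathbf{P}$ (just proved in the preceding theorem) with the shift-equivalence of $f$. Intuitively, since $f$ preserves resolution, all of the resolution change in $\mathbf{P}f$ is carried by $\mathbf{P}$, so nothing new is needed on the upsampling side.

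First I would unfold the definition from Section~\ref{sec:shift}: the claim reduces to verifying
\begin{equation}
\mathcal{S}_{\triangle t}\!\left(\overline{\mathbf{P}}\mathbf{P}f\mathbf{x}\right) = \overline{\mathbf{P}}\mathbf{P}f\,\mathcal{S}_{\triangle t}(\mathbf{x}).
\end{equation}
Since $f$ is resolution-preserving, its shift-equivalence collapses (with $\mathcal{U}$ taken to be the identity) to the ordinary commutation $\mathcal{S}_{\triangle t}(f\mathbf{x}) = f\,\mathcal{S}_{\triangle t}(\mathbf{x})$. Applying the shift-equivalence of $\mathbf{P}$, i.e.\ Eq.~\eqref{eq:prof_4}, to the signal $f\mathbf{x}$ in place of $\mathbf{x}$ gives
\begin{equation}
\mathcal{S}_{\triangle t}\!\left(\overline{\mathbf{P}}\mathbf{P}(f\mathbf{x})\right) = \overline{\mathbf{P}}\mathbf{P}\,\mathcal{S}_{\triangle t}(f\mathbf{x}),
\end{equation}
after which substituting the commutation of $\mathcal{S}_{\triangle t}$ with $f$ on the right-hand side yields the desired identity in one line.

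The only subtle point, which I expect to be the main obstacle to state cleanly rather than a computational difficulty, is the explicit justification that the $\overline{\mathbf{P}}$ coupled to $\mathbf{P}$ remains a valid coupled upsampling for $\mathbf{P}f$. The definition in Section~\ref{sec:shift} only asks for the existence of some suitable $\mathcal{U}$, so I would spell out that because $f$ induces no change in dimension, the domain and codomain of $\mathbf{P}f$ coincide with those of $\mathbf{P}$, and hence the same inverse transform $\overline{\mathbf{P}}$ witnesses shift-equivalence of the composite. Everything after that is a pure algebraic composition using results already established.
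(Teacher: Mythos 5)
Your proposal is correct and follows essentially the same route as the paper: apply the shift-equivalence of $\mathbf{P}$ (with its coupled upsampling $\overline{\mathbf{P}}$) to the signal $f\mathbf{x}$, then commute $\mathcal{S}_{\triangle t}$ past the resolution-preserving $f$. Your extra remark that $\overline{\mathbf{P}}$ remains the valid coupled upsampling for the composite because $f$ changes no dimensions is a point the paper leaves implicit, but it does not change the argument.
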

\begin{proof}
	\label{sec:prof_3}
	\begin{align}
	\mathcal{S}_{\triangle t}\overline{\mathbf{P}} \overbrace{\mathbf{P}(f}\mathbf{x}) = &
	\overline{\mathbf{P}}\mathbf{P}\mathcal{S}_{\triangle t}(f\mathbf{x}) \\ \nonumber
	= & \overline{\mathbf{P}} \overbrace{\mathbf{P}f} \mathcal{S}_{\triangle t}\mathbf{x}
	\end{align}
	
	The first line holds because F-pooling is shift-equivalent and the second line holds because $f$ is shift-equivalent.  $\hfill\blacksquare$
\end{proof}

\begin{table*}[t]
	\centering
	\caption{Accuracy and consistency on sub-ImageNet}
	\begin{tabular}{cccc}
		\hline
		Accuracy/Consistency & DenseNet-121 & ResNet-50 & Mobilenet-v2\\
		\hline
		\hline
		Origin             & 77.47/82.74 &74.44/80.17 & 73.01/78.08\\
		AA-pooling     & 77.14/83.09 &\textbf{76.12}/82.26 & 74.03/79.12\\
		F-pooling        & \textbf{77.56}/\textbf{84.10} &76.05/\textbf{82.63} & \textbf{74.72}/\textbf{80.34}\\
		\hline
	\end{tabular}	
	\label{tab:imagenet}
\end{table*}

That is, shift-equivalence of F-pooling is transitive for $f$. Note that $f$ is allowed to be non-linear. For CNNs, $f$ could be arbitrary combinations of convolutional layers, normalization layers and activation layers. For example, a function of $conv \rightarrow relu \rightarrow \mathbf{P}$ is still shift-equivalent. Not only the final results but also the intermediate results of $f$ should maintain the resolution of inputs.

Is a function that stacks multiple F-poolings still shift-equivalent? Unfortunately, the answer is no. When we define shift-equivalence of a downsampling, we should introduce its coupled upsampling. F-pooling and its inverse are coupled in the definition. That is, $\overline{\mathbf{P}}$ should immediately appear behind $\mathbf{P}$. For a function with multiple F-poolings, one should find a suitable upsampling at its final results. If we choose inverse F-pooling as the upsampling, shift-equivalence does not hold because it is not coupled with any intermediate F-poolings of the composite downsampling. We provide this phenomenon in Fig. \ref{fig:shift_t}. 

In summary, a CNN is locally shift-equivalent when we consider two features where there is zero or one F-pooling between them. When there are two or more F-poolings, a CNN as a whole is no longer shift-equivalent. Although shift-equivalence is not transitive to multiple F-poolings, we find that CNNs with F-poolings are more robust to the image shifts than the ones with commonly used poolings. 

\subsection{Practical issues}
\label{sec:practical}

\subsubsection{Imaginary part} In general, the output of F-pooling $\mathbf{P}\mathbf{x}$ contains both real part and imaginary part. However, for commonly used CNNs, the feature maps should be real. Thus, one need ignore the imaginary part of F-pooling. On the other hand, $\mathbf{P}\mathbf{x}$ is treated as complex when we prove the properties of F-pooling. If we ignore the imaginary part, those properties may not hold, i.e. F-pooling may be no longer shift-equivalent.

Let's analyze the imaginary part in detail. Recall that $m$ is the resolution of downsampled signal. If $m$ is odd, i.e. $m=2\mu - 1$, the intermediate result of F-pooling before back to the time domain is:
\begin{equation}
\label{eq:odd}
\mathbf{D}_{\frac{m}{2}}\mathbf{F}_n \mathbf{x} = 
[\hat{\mathbf{x}}_0, \dots, \hat{\mathbf{x}}_{\mu-1}, \hat{\mathbf{x}}_{-\mu+1}, \dots, \hat{\mathbf{x}}_{-1}]
\end{equation}
Due to the symmetry of DFT, the imaginary part is zero when transforming it back to the time domain. Thus, the imaginary part of F-pooling can be safely ignored when $m$ is odd. If $m$ is even, i.e. $m=2\mu$, the intermediate result is
\begin{equation}
\label{eq:even}
[\hat{\mathbf{x}}_0, \dots, \hat{\mathbf{x}}_{\mu-1}, \boxed{\hat{\mathbf{x}}_{\mu}}, \hat{\mathbf{x}}_{-\mu+1}, \dots, \hat{\mathbf{x}}_{-1}]
\end{equation}
In this case, it contains imaginary part. Compared with \eqref{eq:odd}, the imaginary part is introduced by the non-symmetric term $\mathbf{\hat{x}}_{\mu}$. We can recover the symmetry and eliminate the imaginary part by setting $\hat{\mathbf{x}}_{\mu}$ to zero. We call this trick \emph{odd padding}. With odd padding, F-pooling becomes shift-equivalent again. However, more information is lost during downsampling which may reduce the accuracy of CNNs, especially when the resolution is small. We provide the effects of odd padding in Fig. \ref{fig:odd}. Although F-pooling is not shift-equivalent without odd padding, the error is acceptable in practice. For a better trade-off, we \emph{do not} use odd padding in this work.

\subsubsection{Padding matters} F-pooling is designed to be circular shift-equivalence. When combining with convolutions, their padding mode should be circular padding. However, zero padding is commonly used in convolution layers which destroys their circular shift-equivalence. In this work, we evaluate the performance of F-pooling with both zero-padding and circular padding. As expected, we find that circular padding is better than zero-padding from the perspective of shift-equivalence. However, circular padding is slower than zero padding on open-source deep learning libraries.

\begin{table}[]
	\centering
	\caption{Number of poolings on sub-ImageNet.}
	
	\begin{tabular}{cccc}
		\hline
		Pooling & Max & Average & Strided-conv \\
		\hline
		\hline
		DenseNet          & 1 & 3 & 1 \\
		ResNet                & 1 & 0 & 4 \\		
		MobileNet-v2  & 1 & 0 & 4 \\
		\hline
	\end{tabular}	
	\label{tab:pooling}
\end{table}

\subsection{Implementation}
As 2D DFT is represented as two 1D DFTs along the vertical and horizontal directions, 2D F-pooling can be represented as two 1D F-poolings. 
\begin{equation}
\mathbf{Y} = \mathbf{PXP}^T
\end{equation}
where $\mathbf{X}$ is a $n \times n$ matrix and $\mathbf{Y}$ an $m \times m$ matrix. Because two 1D F-poolings are applied separately, the proofs and properties in the previous sections still hold for 2D signals. We implement F-pooling in PyTorch \cite{paszke2017automatic}. For computational complexity, it is the best to implement F-pooling via fast Fourier transform (FFT) and its inverse. When we downsample $\mathbf{X}$ to $\mathbf{Y}$, the time complexity of F-pooling is $\mathcal{O} (n^2\log n)$. Unfortunately, we find such an implementation in PyTorch is not as fast as expected.

Instead, we implement F-pooling via matrix multiplications. In this way, the time complexity is $\mathcal{O}(n^2m)$. F-pooling requires more computational costs than average pooling or max-pooling. It is faster than a convolutional layer when the number of channels is large like modern CNNs. Moreover, the number of pooling layers is limited compared with the number of convolutional layers. Thus, F-pooling results in a little computational cost into CNNs. Zhang \cite{zhang2019making} claims that it is important to integrate anti-aliasing pooling into CNNs in a proper way. In this work, we follow his settings for a fair comparison. We decompose a pooling with downsampling factor $s$ into two parts: a pooling with factor $1$ and an F-pooling with factor $s$. As shown in Fig. \ref{fig:pooling}, a max-pooling with stride $s$ is replaced with a max-pooling with stride $1$ and a F-pooling with factor $s$. A convolution with stride $s$ is replaced with a convolution with stride $1$ and an F-pooling with factor $s$. An average pooling with stride $s$ is replaced with an F-pooling with factor $s$.

\begin{table*}[]
	\caption{Standard deviation (std) and consistency on sub-ImageNet with circular padding.}
	\centering
	\begin{tabular}{cccc}
		\hline
		Std/Consistency & DenseNet-121 & ResNet-50 & MobileNet-v2\\
		\hline
		\hline
		Origin             & 0.043/90.44 &0.051/87.97 & 0.051/87.71\\
		AA-pooling     & 0.055/88.21 &0.056/87.77 & 0.059/86.19\\
		
		F-pooling        & \textbf{0.035}/\textbf{91.88} &\textbf{0.037}/\textbf{91.01} & \textbf{0.041}/\textbf{90.32}\\
		\hline
	\end{tabular}	
	\label{tab:imagenet_circular}
\end{table*}

\begin{table*}[t!]
	\caption{Standard deviation (std) and consistency on CIFAR-100 with circular padding.}
	\centering
	\begin{tabular}{cccc}
		\hline
		Std/Consistency & DenseNet-41 & ResNet-18 & ResNet-34\\
		\hline
		\hline
		Origin             & 0.111/77.62 & 0.176/59.93 & 0.183/58.94\\
		AA-pooling     & 0.166/66.97 & 0.164/65.67 & 0.180/60.53\\
		F-pooling        & \textbf{0.088}/\textbf{82.99} & \textbf{0.073}/\textbf{84.34} & \textbf{0.088}/\textbf{80.90}\\
		\hline
	\end{tabular}
	\label{tab:cifar_circular}
\end{table*}

\section{Experimental Results}

\subsection{1D signals}
We test the shift-equivalence of F-pooling on 1D signals. Given a signal, we first apply pooling, upsampling, and shift sequentially to them. Then, we apply shift, pooling, and upsampling sequentially to it. In this way, we obtain two transformed signals and compare whether they are exactly the same. In Fig. \ref{fig:test}, the original signal is a randomly selected row of a $512 \times 512$ image. We apply max-pooling, average pooling, and F-pooling with stride 4 to those signals. As shown in Fig. \ref{fig:test}, F-pooling is perfectly shift-equivalent. Average pooling performs better than max-pooling from the perspective of shift-equivalence. We use \emph{odd padding} for the experiments.

\begin{table*}[t]
	\caption{Accuracy and consistency on CIFAR-100 with fewer frequencies.}
	\centering
	\begin{tabular}{lccc}
		\hline
		Accuracy/Consistency & DenseNet-41 & ResNet-18 & ResNet-34\\
		\hline
		\hline
		$25.0\%$                                 & 60.55/54.47 & 62.62/57.44 & 61.16/54.80\\
		$37.5 \%$                                & 71.91/57.00 & 73.45/\textbf{57.61} & 73.30/\textbf{57.57}\\
		$50.0\%$(F-pooling)          & \textbf{73.19}/\textbf{60.51} & \textbf{73.93}/57.48 & \textbf{73.51}/56.54\\
		\hline
	\end{tabular}
	\label{tab:rec}
\end{table*}

\subsection{Image classification}
\label{sec:IC}
We evaluate accuracy and shift robustness of F-pooling on image classification. We measure shift robustness by the consistency of classifications when we shift images, the same as \cite{zhang2019making}. We check how often the model outputs the same class given the same image with two different shifts.
\begin{equation}
\mathbb{E}_{\triangle t_0, \triangle  t_1} \mathbf{1} \left\{ C(\mathcal{S}_{\triangle t_0}) =C(\mathcal{S}_{\triangle t_1}) \right\}
\end{equation}
where $C$ is the predicted category. For convenience, we only evaluate diagonal shifts in this work. The padding mode of convolutions is set to zero padding. We compare F-pooling with AA-pooling. \emph{Origin} means that we keep the pooling methods of CNNs.

\subsubsection{CIFAR-100}
CIFAR-100 \cite{2009Learning} contains 50k images for training and 10k images for test. Each sample is a low-resolution $32 \times 32$ color image that is classified into one of 100 categories. We train this dataset on ResNet \cite{he2016deep} and DenseNet \cite{huang2017densely}. The number of shifted pixels ranges from -7 to 7 when we evaluate the consistency. We show the accuracy and consistency in Table \ref{tab:cifar}. Those results are averaged by 3 runs. F-pooling is better than commonly poolings and it is competitive to AA-pooling in terms of accuracy and consistency. Moreover, F-pooling is complementary to shift data augmentation.

\subsubsection{Sub-ImageNet}
We evaluate the classification performance on high-resolution images. Original ImageNet dataset \cite{ILSVRC15} contains 1.28M training and 50k validation images, classified into one of 1,000 categories. To reduce the computational resources for training, we use a subset of ImageNet, called \emph{sub-ImageNet}. We randomly select 200 categories from 1,000 categories. For each category, we randomly select 500 images as the training set. Thus, we collect 100k images for training. We select corresponding validation images belonging to the selected categories. There are about 10k validation images. All models are trained on a single GPU with batch size 64 and 100 epochs. We decrease the initial learning rate by a factor 10 every 40 epochs. For other hyper-parameters, we follow the official training script of PyTorch \footnote{https://github.com/pytorch/examples/tree/master/imagenet}. We train ResNet, DenseNet, and MobileNet-v2 \cite{sandler2018mobilenetv2}. Those models are widely used as benchmarks. As shown in Table \ref{tab:pooling}, max-pooling, average pooling, and strided-convolution are covered in them. As in \cite{zhang2019making}, we keep the first downsampling of models to reduce computational costs. The number of shifted pixels ranges from -63 to 63 when we evaluate the consistency. We provide the accuracy and consistency in Table \ref{tab:imagenet}. F-pooling performs consistently better than common poolings, which is slightly better than AA-pooling on sub-ImageNet. We also provide the loss curves in Fig. \ref{fig:loss}. As shown in the figure, the training cross-entropy of F-pooling is not lower than the others, whereas the test corss-entropy of F-pooling is slightly lower than the others.

\begin{figure*}[t!]
	\centering
	\subfigure[DenseNet-train]{
		\includegraphics[width=0.4\textwidth]{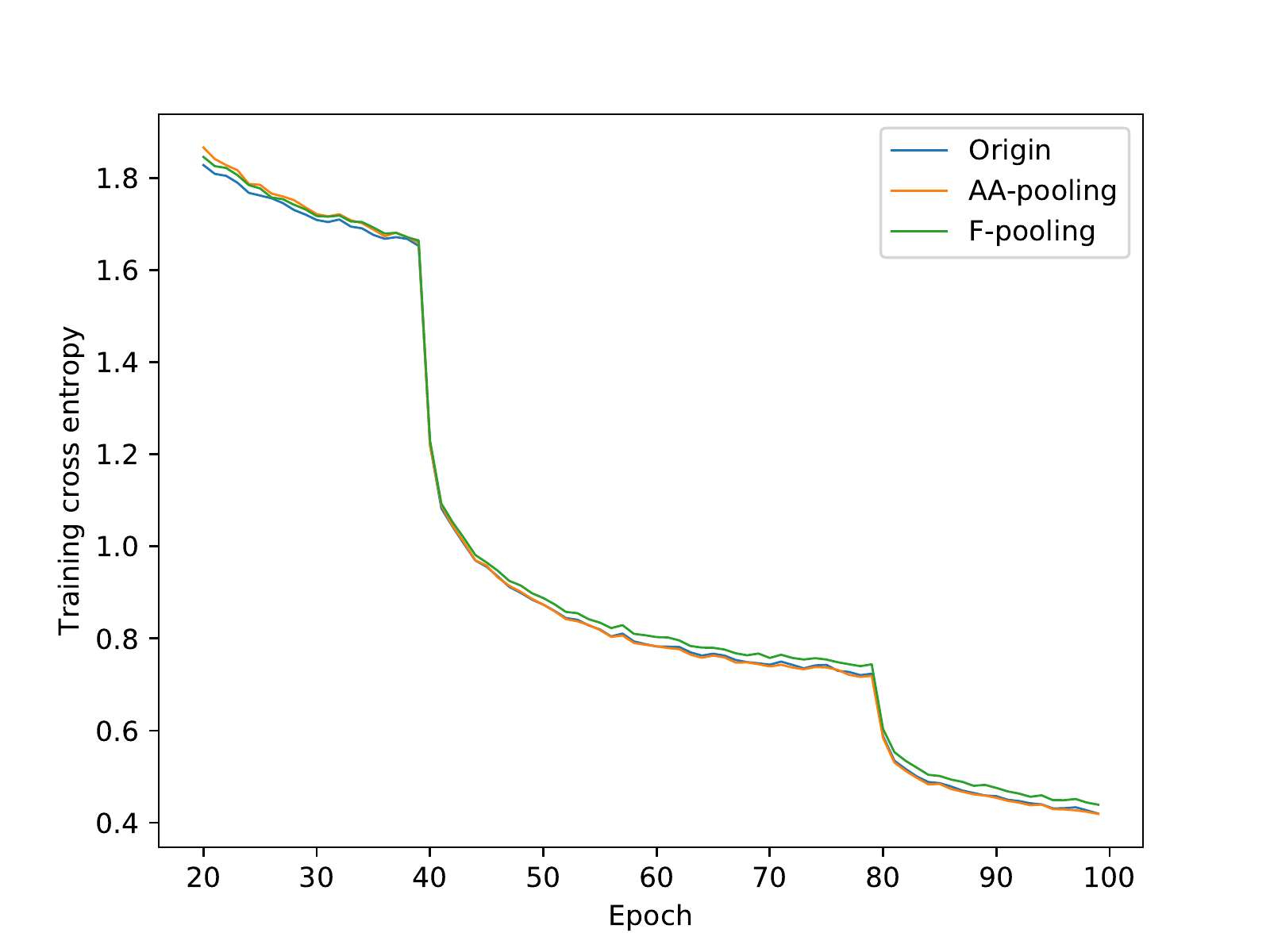}
	}
	\subfigure[DenseNet-test]{
		\includegraphics[width=0.4\textwidth]{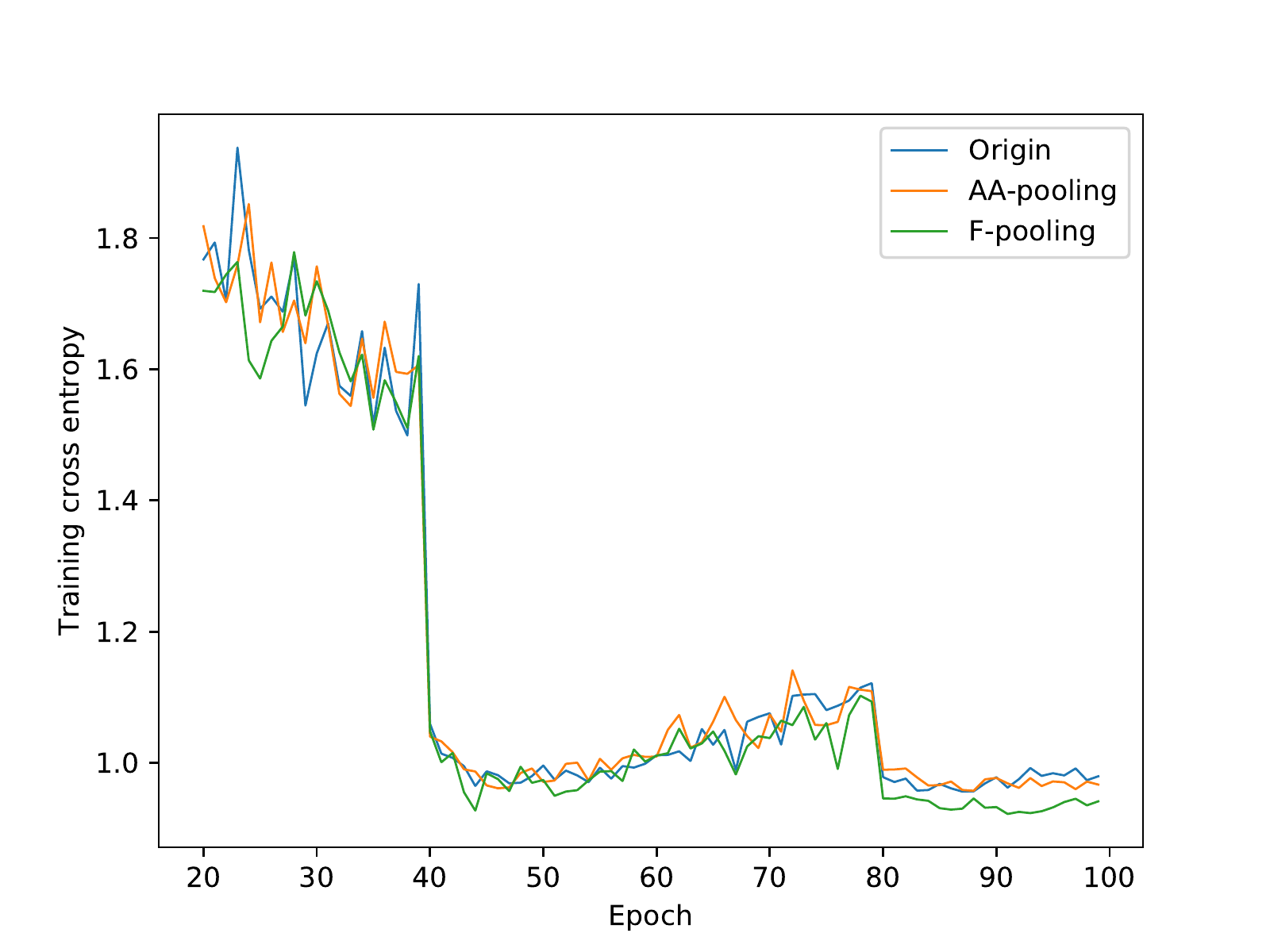}
	}
	\subfigure[ResNet-train]{
		\includegraphics[width=0.4\textwidth]{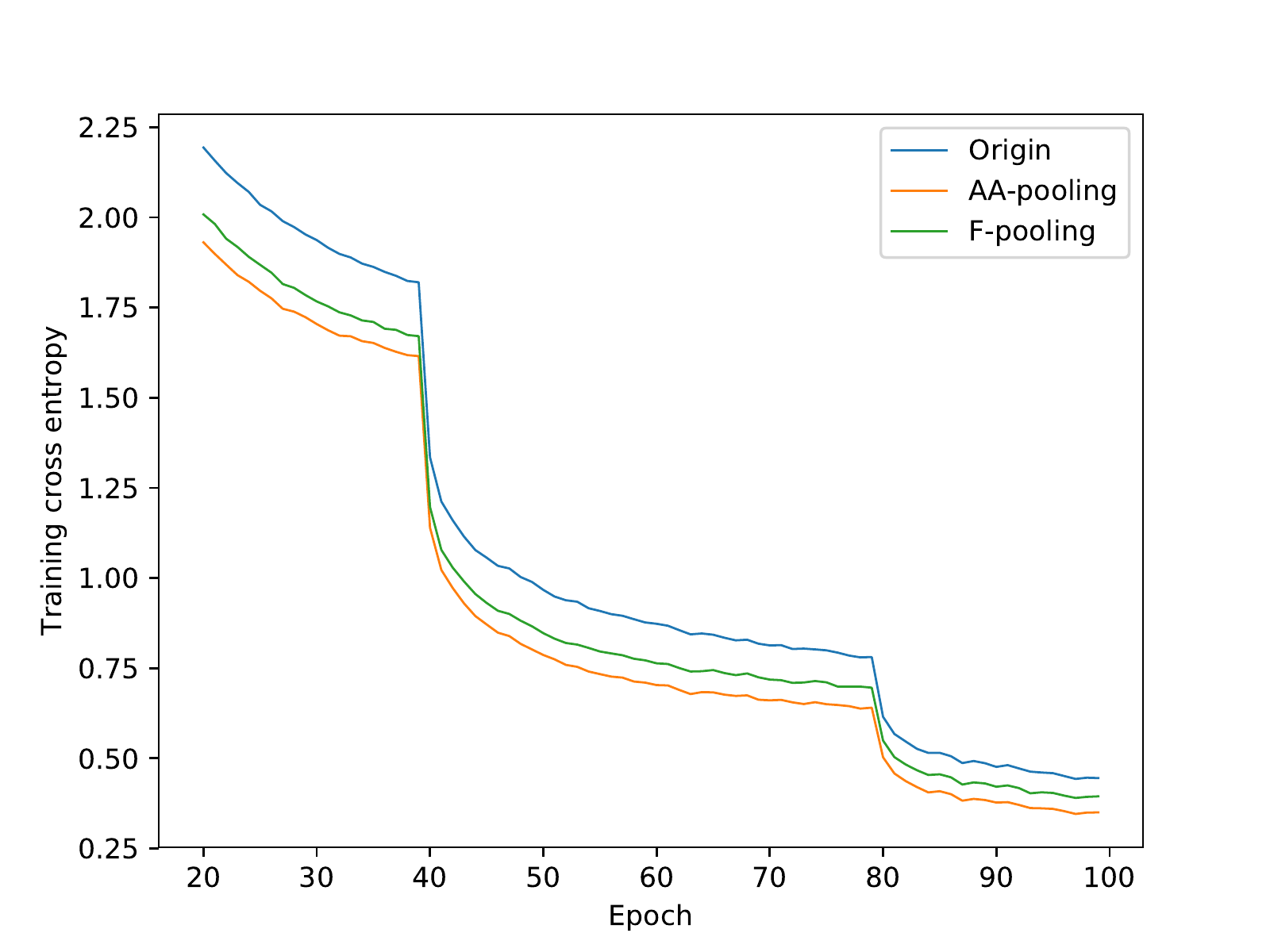}
	}
	\subfigure[ResNet-test]{
		\includegraphics[width=0.4\textwidth]{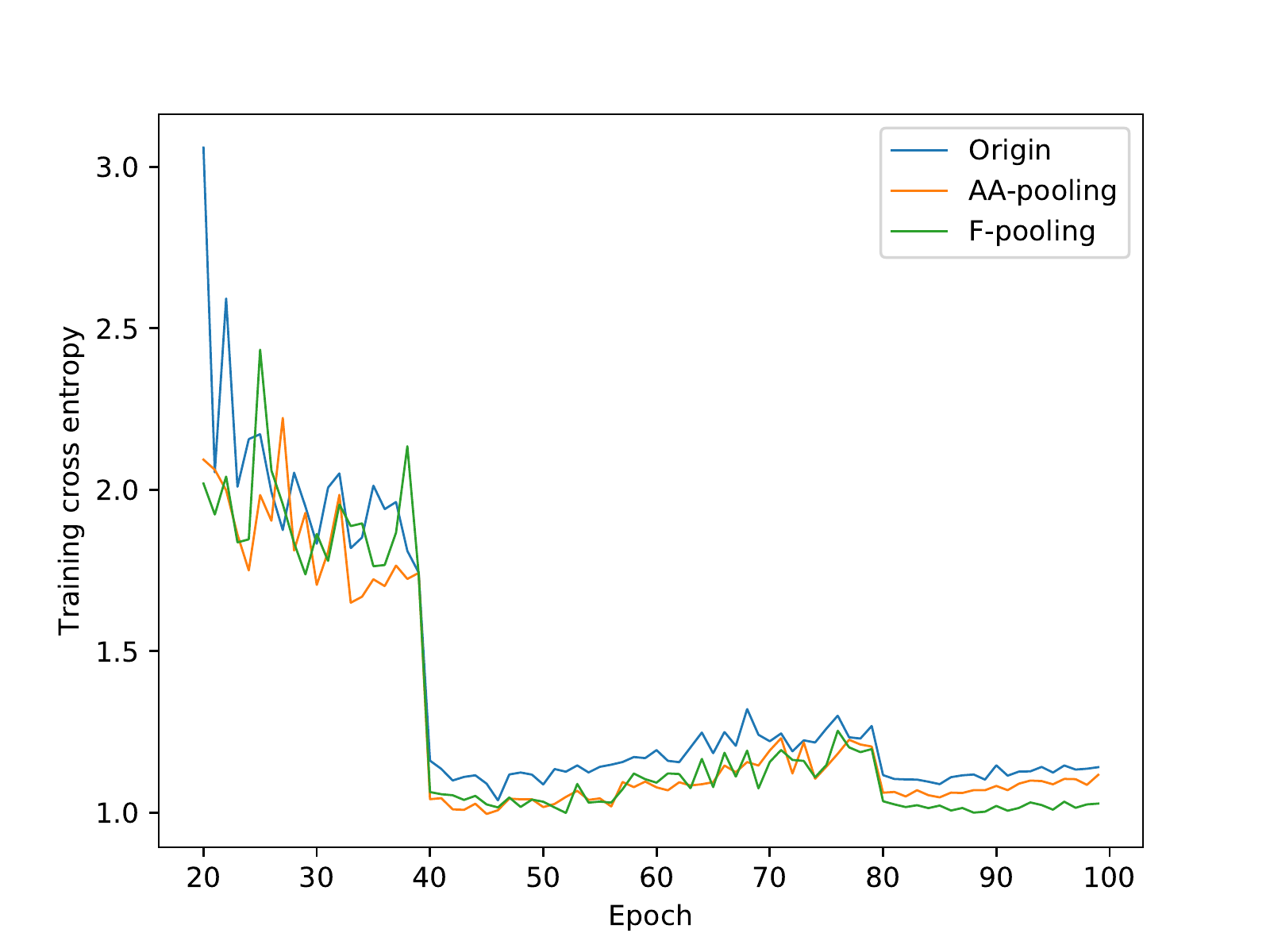}
	}
	\subfigure[MobileNet-train]{
		\includegraphics[width=0.4\textwidth]{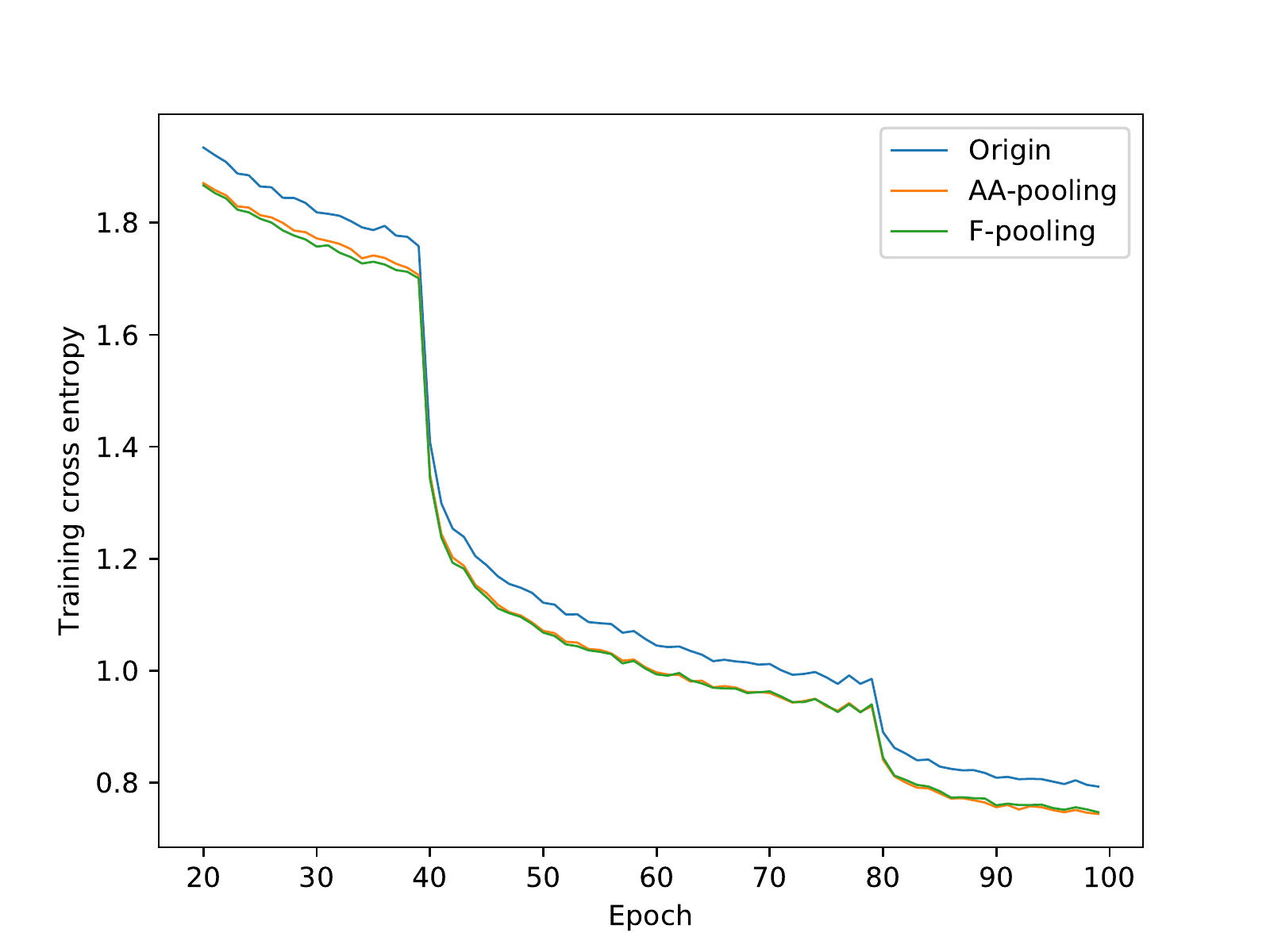}
	}
	\subfigure[MobileNet-test]{
		\includegraphics[width=0.4\textwidth]{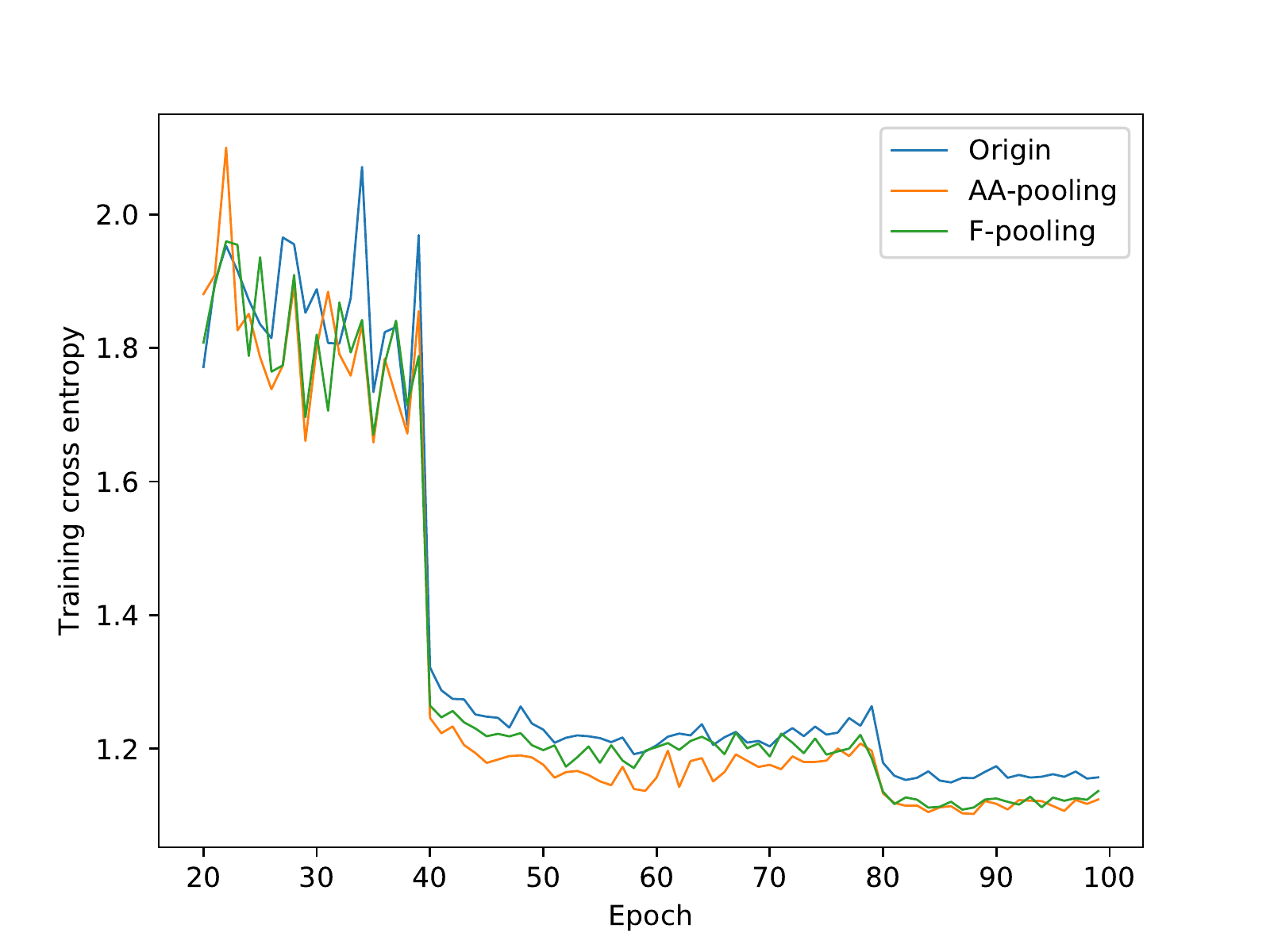}
	}
	\caption{Loss curves on sub-ImageNet. The left and right columns show the cross-entropy on training and test sets, respectively. The first, second and third rows show the cross-entropy on ResNet, DenseNet and MobileNet, respectively. Best viewed on screen. }
	\label{fig:loss}
\end{figure*}

\subsection{Ablation study}

\subsubsection{Circular padding} As mentioned in Section \ref{sec:practical}, circular padding may improve the consistency of F-pooling. In this section, we evaluate the effects of padding modes on the consistency of CNNs. We set the padding mode of all convolutions to circular padding, instead of zero padding used in Tables \ref{tab:cifar} and \ref{tab:imagenet}. To further evaluate the robustness to the shifts, we use the standard deviation (std) of probabilities of the corresponding label. We show
std and consistency on CIFAR-100 in Table \ref{tab:imagenet_circular}. We provide std and consistency on sub-ImageNet in Table \ref{tab:cifar_circular}. Based on those results, it can be concluded that
\begin{itemize}
	\item Compared with zero padding, circular padding improves the consistency of CNNs for all pooling methods.
	\item F-pooling gets more benefits from circular padding. With circular padding, F-pooling performs significantly better than AA-pooling in terms of consistency.
\end{itemize}

\subsubsection{Reconstruction}
F-pooling is optimal from the perspective of reconstruction. This is because F-pooling keeps all frequency components lower than Nyquist frequency. One may ask if this reconstruction optimality is necessary for classifications and what would happen if we retain fewer frequency components. Thus, we choose the rate of the retained frequency components from $\{50.0\%, 37.5\%, 25.0\%\}$. We evaluate the accuracy and consistency on CIFAR-100 when we vary the rates of the frequency components. As shown in Table \ref{tab:rec}, filtering more frequencies reduces accuracy but has less effect on the consistency. The optimal reconstruction of F-pooling is helpful for better classification.

\section{Conclusions}
In this paper, we have proposed F-pooling for CNNs. F-pooling reduces the dimension of signals in the frequency domain. We have defined shift-equivalence of functions which contain downsamplings by introducing an upsampling. Under this definition, we have proved that F-pooling is the optimal anti-aliasing downsampling with shift-equivalence. We have integrated F-pooling into modern CNNs. We have verified that F-pooling remarkably increases accuracy and robustness with respect to the shifts of modern CNNs. We believe that F-pooling plays an important role in the applications that shift-equivalence is required, such as object detection and semantic segmentation.

%



\newpage
{\small
	\bibliographystyle{ieee}
	\bibliography{fpooling}
}
\end{document}